\newtheorem{theorem}{Theorem}
\newtheorem{lemma}[theorem]{Lemma}  
\newtheorem{definition}{Definition}
\newenvironment{subproof}[1][\proofname]{%
  \begin{proof}[#1]%
}{%
  \end{proof}%
}
\newtheorem*{reptheorem}{Theorem}
\renewcommand{\tilde}{\widetilde}
\renewcommand{\hat}{\widehat}
\renewcommand{\bar}{\overline}
\newcommand{\defn}{\triangleq}
\newcommand{\mc}[1]{\ensuremath{\mathcal{#1}}}
\newcommand{\Real}{{\mathbb{R}}}
\newcommand{\of}[1]{^{{(#1)}}}
\newcommand{\tran}{^{\top}}
\DeclareMathOperator{\EmpQuant}{EmpQuant}
\newcommand{\test}{_{\mathsf{test}}}
\renewcommand{\cal}{_{\mathsf{cal}}}
\newcommand{\train}{_{\mathsf{train}}}
\renewcommand{\eqref}[1]{(\ref{eq:#1})}
\newcommand{\secref}[1]{Section~\ref{sec:#1}}
\newcommand{\figref}[1]{Fig.~\ref{fig:#1}}
\newcommand{\Figref}[1]{Figure~\ref{fig:#1}}
\newcommand{\tabref}[1]{Table~\ref{tab:#1}}
\newcommand{\Tabref}[1]{Table~\ref{tab:#1}}
\newcommand{\appref}[1]{Appendix~\ref{app:#1}}
\newcommand{\thmref}[1]{Theorem~\ref{thm:#1}}
\newcommand{\Lemref}[1]{Lemma~\ref{lem:#1}}
\newcommand{\lemref}[1]{Lemma~\ref{lem:#1}}
\renewcommand{\algref}[1]{Algorithm~\ref{alg:#1}}
\DeclareMathOperator{\MIL}{MIL}
\DeclareMathOperator{\EJC}{EJC}
\DeclareMathOperator{\ESC}{ESC}
\DeclareMathOperator{\EAC}{EAC}
\newcommand{\calk}{_{\mathsf{cal},k}}
\newcommand{\tune}{_{\mathsf{tune}}}
\newcommand{\tunek}{_{\mathsf{tune},k}}
\newcommand{\multi}{_\mathsf{multi}}
\newcommand{\avg}{_\mathsf{avg}}
\title{Minimax Multi-Target Conformal Prediction with Applications to Imaging Inverse Problems}
\author{\name Jeffrey Wen \email wen.254@buckeyemail.osu.edu \\
      \addr Department of Electrical and Computer Engineering\\
      The Ohio State University
      \AND
      \name Rizwan Ahmad \email rizwan.ahmad@osumc.edu \\
      \addr Department of Biomedical Engineering\\
      The Ohio State University
      \AND
      \name Philip Schniter \email schniter.1@osu.edu\\
      \addr Department of Electrical and Computer Engineering\\
      The Ohio State University
}
\begin{document}

\maketitle

\begin{abstract}
In ill-posed imaging inverse problems, uncertainty quantification remains a fundamental challenge, especially in safety-critical applications.
Recently, conformal prediction has been used to quantify the uncertainty that the inverse problem contributes to downstream tasks like image classification, image quality assessment, fat mass quantification, etc. 
While existing works handle only a scalar estimation target, practical applications often involve multiple targets. 
In response, we propose an asymptotically minimax approach to multi-target conformal prediction that provides tight prediction intervals while ensuring joint marginal coverage.
We then outline how our minimax approach can be applied to multi-metric blind image quality assessment, multi-task uncertainty quantification, and multi-round measurement acquisition.
Finally, we numerically demonstrate the benefits of our minimax method, relative to existing multi-target conformal prediction methods, using both synthetic and magnetic resonance imaging (MRI) data.
Code is available at \url{https://github.com/jwen307/multi_target_minimax}.
\end{abstract}

\section{Introduction}
\label{sec:introduction}

Imaging inverse problems \citep{Bertero:Book:21} span a wide array of tasks, such as denoising, inpainting, accelerated magnetic resonance imaging (MRI), limited-angle computed tomography, phase retrieval, and image-to-image translation.
In such problems, the objective is to recover a true image $x_0$ from noisy, incomplete, or distorted measurements $y_0 = \mathcal{A}(x_0)$. 
These problems tend to be ill-posed, in that many distinct hypotheses of $x_0$ can explain the collected measurements $y_0$.
When perfect recovery of $x_0$ is difficult or impossible, uncertainty quantification (UQ) is critical to safely using/interpreting a given reconstruction $\hat{x}_0$, especially in high-stakes fields like science or medicine \citep{Chu:JACR:20,Banerji:NM:23}.

The field of image recovery has evolved significantly over the decades, and most contemporary approaches are based on deep learning (DL) \citep{Arridge:AN:19}.
Quantitatively, recent DL-based methods outperform classical methods on average and, qualitatively, they produce reconstructions that are sharp and detailed
\citep{Ongie:JSAIT:20}.
When the inverse problem is highly ill-posed, classical methods tend to produce recoveries with recognizable visual artifacts, from which it is relatively easy to gauge uncertainty.
For example, radiologists receive explicit training in this regard \citep{Virmani:JNM:15}.
In contrast, DL-based methods can hallucinate, i.e., generate recoveries that are visually plausible but differ from the truth in clinically or scientifically important ways \citep{Cohen:MICCAI:18,Belthangady:NMe:19,Hoffman:NMe:21,Muckley:TMI:21,Bhadra:TMI:21,Gottschling:23,Tivnan:MICCAI:24}.
This underscores the need for rigorous UQ, e.g., methods that provide statistical guarantees on estimates of $x_0$ or of some function $\mu(x_0)$.

For example, a recent line of work \citep{Wen:ECCV:24,Cheung:24} quantifies the imaging-induced uncertainty on downstream tasks such as pathology classification or fat-mass quantification.
Defining the target $z_0$ as the output of the task applied to the (unknown) true image, they use conformal prediction \citep{Vovk:Book:05, Angelopoulos:FTML:23} to construct prediction intervals $\mathcal{C}$ that are statistically guaranteed to contain the target.
In a related line of work, \citet{Wen:TMLR:25} provides statistical guarantees on the quality of the reconstructed image $\hat{x}_0$ relative to the true image, where ``quality'' is defined according to an arbitrary full-reference image-quality (FRIQ) metric like peak signal-to-noise ratio (PSNR) or structural similarity index measure (SSIM) \citep{Wang:TIP:04}.
Defining the target as the FRIQ of $\hat{x}_0$ relative to the (unknown) true $x_0$, they use conformal prediction to construct a bound on FRIQ that is statistically guaranteed.

While the above methods rigorously quantify the downstream impact of reconstruction uncertainty, they handle only a scalar target.
In practice, one may want to consider multiple targets.
For example, one may seek to identify multiple pathologies from a single recovery or to judge the quality of that recovery according to multiple metrics.
Although multi-target conformal prediction methods have been proposed, they suffer from either limited interpretability \citep{Messoudi:COPA:22,Feldman:JMLR:23}\citep{Rosenberg:ICLR:23,Thurin:ICML:25,Klein:25,Braun:25}, 
a lack of guaranteed joint coverage \citep{Messoudi:PR:21,Teneggi:ICML:23,Park:AISTATS:25}, 
or some combination of overly conservative prediction intervals and/or high computational complexity \citep{Messoudi:COPA:20,Diquigiovanni:JMVA:22,Sampson:SADM:24,Sun:ICLR:24},
as we explain in the sequel.

We thus propose a new approach to multi-target conformal prediction.
For problems with $K\geq 1$ targets, our goal is to ensure a notion of fairness between targets.
With prediction intervals $\mathcal{C}_k$ and scalar targets $Z_{0,k}$ for $k=1,\dots,K$, our method aims to ensure that no one ``single-target coverage'' $\Pr\{Z_{0,k}\in\mathcal{C}_k\}$ is favored over another, while also ensuring that all prediction intervals simultaneously contain their corresponding targets with a user-specified probability of $1\!-\!\alpha$.
To do this, we minimize the maximum single-target coverage under the joint-coverage constraint $\Pr\{\cap_{k} Z_{0,k}\in\mathcal{C}_k\}\leq 1\!-\!\alpha$.
Since the single-target coverage increases with the interval size $|\mathcal{C}_k|$, our approach additionally aims to prevent any prediction set $\mc{C}_k$ from being unnecessarily large.
Our contributions are as follows:
\begin{enumerate}
    \item Using a minimax formulation, we propose a novel multi-target conformal prediction approach with finite-sample marginal joint-coverage guarantees and low computational complexity. 
    \item We prove that our method is minimax in the limit of infinite tuning and calibration data.
    \item For inverse problems, we propose a multi-round measurement acquisition scheme with marginal coverage guarantees on the final round.
    \item We numerically compare our proposed method to several existing multi-target conformal prediction methods on a synthetic-data problem and four accelerated-MRI problems. 
\end{enumerate}

\section{Background} 
\label{sec:background}

\subsection{Single-target conformal prediction} 

Conformal prediction \citep{Vovk:Book:05, Angelopoulos:FTML:23} 
is a general framework that enables one to construct uncertainty intervals with certain statistical guarantees for any black-box predictor.
Importantly, it does not require any distributional assumptions on the data other than exchangeability, which allows for adoption in a broad range of applications. 
In this section, we briefly review the basics of conformal prediction, and in particular the computationally-efficient version known as split conformal prediction \citep{Papadopoulos:ECML:02,Lei:JASA:18}.

Suppose that we have a black-box model $h:\mathcal{U}\rightarrow\Real$ that predicts a target $z_0 \in \Real$ from features $u_0 \in \mathcal{U}$.
The prediction $\hat{z}_0 = h(u_0)$ may or may not be close to the true target $z_0$, but one can use conformal prediction to compute a prediction interval $\mc{C}_\lambda(\hat{z}_0)\subset\Real$ that contains $z_0$ with high probability.
To compute this interval, conformal prediction uses a dataset $\{(u_i,z_i)\}_{i=1}^n$ of feature--target pairs distinct from those used to train $h(\cdot)$.
This dataset is converted to a calibration set $d\cal \defn \{(\hat{z}_i,z_i)\}_{i=1}^n$ using $\hat{z}_i=h(u_i)$, and $d\cal$ is used to find a $\hat{\lambda}(d\cal)$ satisfying the marginal coverage guarantee \citep{Lei:JRSS:14}
\begin{equation}
\Pr \big\{ Z_{0} \in \mc{C}_{\hat{\lambda}(D\cal)}(\hat{Z}_{0}) \big\} \geq 1-\alpha 
\label{eq:coverage} ,
\end{equation}
where $\alpha$ is a user-chosen error rate.
Here and in the sequel, we use capital letters to denote random variables and lower-case letters to denote their realizations.
In words, \eqref{coverage} guarantees that the unknown target $Z_0$ falls within the interval $\mc{C}_{\hat{\lambda}(D\cal)}(\hat{Z}_{0})$ with probability at least $1\!-\!\alpha$ when averaged over the randomness in the test data $(Z_0,\hat{Z}_0)$ and calibration data $D\cal$. 

The process of computing $\hat{\lambda}(d\cal)$ is known as calibration. 
To calibrate, one first defines a nonconformity score $s(\hat{z}_i,z_i)$.
The choice of the nonconformity score function is quite flexible; it requires only that the score is higher when there is a worse agreement between $z_i$ and $\hat{z}_i$. 
Common approaches include the absolute residual, locally-weighted residual \citep{Lei:JASA:18}, and conformalized quantile regression methods \citep{Romano:NIPS:19}.
The nonconformity score $s_i =s(\hat{z}_i,z_i)$ is then computed for each sample pair $(\hat{z}_i, z_i)$ in the calibration set $d\cal$, and $\hat{\lambda}(d\cal)$ is chosen as 
\begin{equation}
    \hat{\lambda}(d\cal) \defn \EmpQuant\Big( \frac{\lceil (1-\alpha)(n+1) \rceil}{n}; s_1,\dots,s_n \Big),
    \label{eq:lambda_hat}
\end{equation}
which is a slightly more conservative quantile than the $1\!-\!\alpha$ quantile. 
With $\hat{\lambda}(d\cal)$ computed, the prediction interval for the $i$th sample is simply defined as 
\begin{equation}
    \mc{C}_{\hat{\lambda}(d\cal)}(\hat{z}_{i}) = \big\{z: s(\hat{z}_i,z) \leq \hat{\lambda}(d\cal)\big\}
\label{eq:prediction_set} .
\end{equation}
Following this design, the marginal coverage guarantee \eqref{coverage} holds when $(\hat{Z}_0,Z_0), (\hat{Z}_1,Z_1),\dots,(\hat{Z}_n,Z_n)$ are statistically exchangeable \citep{Vovk:Book:05}, a weaker condition than i.i.d.
Under the additional assumption that the nonconformity scores $S_0, S_1, \dots, S_n$ are almost surely distinct, the coverage can also be upper bounded \citep{Romano:NIPS:19} by
\begin{align}
    \Pr \big\{ Z_{0} \in \mc{C}_{\hat{\lambda}(D\cal)}(\hat{Z}_{0}) \big\} \leq 1-\alpha + \frac{1}{n+1}.
    \label{eq:coverage_upper_bound}
\end{align}

\subsection{Application to imaging inverse problems} \label{sec:imaging_background}

In imaging inverse problems, conformal prediction has emerged as a tool to quantify the uncertainty in image recovery.
Several approaches \citep{Angelopoulos:ICML:22, Horwitz:22, Teneggi:ICML:23, Kutiel:ICLR:23, Narnhofer:JIS:24} use conformal prediction to construct, for each individual pixel, an interval that is guaranteed to contain the true pixel value with high probability.
For quantifying multi-pixel uncertainty, \citet{Belhasin:TPAMI:23} propose to compute conformal intervals on the principal components of the posterior covariance matrix, and \citet{Sankaranarayanan:NIPS:22} construct conformal intervals for semantic attributes in the latent space of a disentangled generative adversarial network.

Although these notions of uncertainty are interesting to consider, they don't directly quantify the impact of recovery errors on downstream imaging tasks such as image classification, image quality assessment, and quantitative imaging.
Consequently, task-based image uncertainty methods like \citep{Wen:ECCV:24,Cheung:24,Wen:TMLR:25} have been proposed.
We now briefly review these methods using a unified notational framework.

Both \citet{Wen:ECCV:24} and \cite{Cheung:24} quantify the uncertainty in estimating $\mu(x_0)\in\Real$ given the measurements $y_0$. 
In \citet{Wen:ECCV:24} $\mu(\cdot)$ is a soft-output classifier, and in \cite{Cheung:24} it is a fat-mass quantifier, but in either case the target is set at $z_0=\mu(x_0)$.
Assuming access to an approximate posterior sampler $g(\cdot,\cdot)$ that generates $c$ samples $\{ \tilde{x}_i\of{j}\}_{j=1}^c$ per measurement vector $y_i$ via $\tilde{x}_{i}\of{j}=g(y_i, \tilde{v}_{i}\of{j})$ using i.i.d code vectors $\tilde{v}_{i}\of{j} \sim \mc{N}(0,I)$, the prediction is computed as
\begin{equation}
    \hat{z}_i 
    = [\hat{z}_{i}\of{1}, \dots, \hat{z}_{i}\of{c}]\tran
    = [\mu(\tilde{x}_{i}\of{1}), \dots, \mu(\tilde{x}_{i}\of{c})]\tran 
    \in \Real^c .
\end{equation}

Several nonconformity scores can be used, but here we describe only the version with the conformalized quantile regression (CQR) method of \cite{Romano:NIPS:19}.
With CQR, the $\frac{\alpha}{2}$ and $1\!-\!\frac{\alpha}{2}$ empirical quantiles are computed as 
\begin{equation}
    \hat{q}_{\frac{\alpha}{2}}(\hat{z}_i) = \EmpQuant\Big(\frac{\alpha}{2}; \hat{z}_{i}\of{1}, \dots, \hat{z}_{i}\of{c}\Big) \quad \text{and} \quad \hat{q}_{1-\frac{\alpha}{2}}(\hat{z}_i) = \EmpQuant\Big(1-\frac{\alpha}{2}; \hat{z}_{i}\of{1}, \dots, \hat{z}_{i}\of{c}\Big)
    \label{eq:empirical_quantile} ,
\end{equation}
respectively, and the nonconformity score is defined as
\begin{equation}
    s(\hat{z}_i, z_i) = \max \big\{ \hat{q}_{\frac{\alpha}{2}}(\hat{z}_i) - z_i, z_i - \hat{q}_{1-\frac{\alpha}{2}}(\hat{z}_i)\big\}
    \label{eq:quantile_nonconformity} ,
\end{equation}
after which the prediction interval $\mc{C}_{\hat{\lambda}(d\cal)}(\hat{z}_{i})$ is constructed as in \eqref{prediction_set}.
Because this prediction interval changes with $y_i$, it is said to be ``adaptive'' \citep{Lei:JASA:18}.
In any case, it satisfies the marginal coverage guarantee in \eqref{coverage} when $(\hat{Z}_0,Z_0), (\hat{Z}_1,Z_1),\dots,(\hat{Z}_n,Z_n)$ are statistically exchangeable.

In related work, \citet{Wen:TMLR:25} seek to estimate the FRIQ (e.g., PSNR, SSIM, etc.) $m(\hat{x}_0,x_0)$ of an image recovery $\hat{x}_0=f(y_0)$ relative to the true image $x_0$ when given access to measurements $y_0$ but not $x_0$ itself.
To do so, they set the target as $z_0 = m(\hat{x}_0, x_0)$ and use an approximate posterior sampler that generates $c$ samples $\{ \tilde{x}_i\of{j}\}_{j=1}^c$ per measurement vector $y_i$ to compute the prediction
\begin{equation}
    \hat{z}_i 
    = [m(\hat{x}_i, \tilde{x}_i\of{1}), \dots, m(\hat{x}_i, \tilde{x}_i\of{c})]\tran 
    \in \Real^c .
\end{equation}
\citet{Wen:TMLR:25} then used empirical quantiles to construct a one-sided prediction interval $\mc{C}_\lambda(\cdot)$ to either lower- or upper-bound the FRIQ, as appropriate.
Note that $m(\hat{x}_i,\cdot)$ can be viewed as a recovery-conditioned task.
To maintain consistency with other task-based approaches, when discussing FRIQ estimation in the sequel, we construct two-sided intervals using \eqref{prediction_set} with the nonconformity score from \eqref{quantile_nonconformity}.

\subsection{Multi-target conformal prediction} \label{sec:background_multi_target}

As discussed in \secref{introduction}, one may be interested in conformal prediction of several targets, which we combine into a multi-dimensional target vector $[z_{i,1}, \dots, z_{i,K}] = z_i \in \Real^K$.
We focus on the case where one is given a prediction $\hat{z}_0\in\Real^K$ of unknown test $z_0\in\Real^K$, along with a calibration set $d\cal = \{ (\hat{z}_i, z_i) \}_{i=1}^{n}$, and the goal is to compute $K$ prediction intervals $\{ \mc{C}_{\hat{\lambda}(D\cal),k}(\hat{Z}_{0}) \}_{k=1}^{K}$ that satisfy the joint marginal coverage guarantee
\begin{equation}
    \Pr \big\{ \cap_{k=1}^{K} Z_{0,k} \in \mc{C}_{\hat{\lambda}(D\cal),k}(\hat{Z}_{0}) \big\} \geq 1-\alpha.
    \label{eq:joint_coverage}
\end{equation}
This guarantee ensures that all target components simultaneously lie within their respective prediction intervals with a probability at least $1\!-\!\alpha$ over the randomness in the calibration and test data.

Variations on \eqref{joint_coverage} are possible, such as minimizing a risk that allows a fraction of the target components to lie outside their prediction intervals \citep{Teneggi:ICML:23}.
Likewise, while \eqref{joint_coverage} can be interpreted as constructing a hyper-rectangle in $\Real^K$ that contains $Z_0$ with high probability, it is possible to construct non-rectangular regions, such as ellipsoidal regions \citep{Messoudi:COPA:22} or more complicated regions defined by the outputs of a conditional variational auto-encoder \citep{Feldman:JMLR:23}, vector quantile regressor \citep{Rosenberg:ICLR:23}, optimal-transport map \citep{Thurin:ICML:25,Klein:25}, or a volume-minimizing mapping \citep{Braun:25}.
Although these non-rectangular regions can give smaller uncertainty volumes, they are less interpretable for the applications we consider, since the uncertainty interval on one target component will depend on the values of the other target components. 

Inspired by \eqref{joint_coverage}, several approaches have been proposed to construct prediction intervals.
\citet{Messoudi:COPA:20} assume that the nonconformity score components are statistically independent, so that when the components are individually calibrated to yield an error-rate of $\alpha_1$, the joint error-rate $\alpha$ will equal $1-(1\!-\!\alpha_1)^K$.
This allows setting $\alpha_1=1-(1\!-\!\alpha)^{1/K}$ to meet a desired joint error-rate of $\alpha$.
However, the independence assumption may not hold in practice, where one could encounter $K$ dependent score components that yield a joint error-rate $> 1-(1\!-\!\alpha_1)^K$, in which case the joint-coverage guarantee \eqref{joint_coverage} would be violated.
But even when the joint coverage holds, we show in \secref{experiments} that
the prediction intervals from \cite{Messoudi:COPA:20} are overly conservative.

Another line of work uses copulas \citep{Nelsen:Book:06} to model the statistical dependency between the score components $\{s_{i,k}\}_{k=1}^K$, where $s_{i,k}=s_k(\hat{z}_{i,k},z_{i,k})$.
Given a random vector $S=[S_1,\dots,S_K]$ with marginal CDFs $F_{S_k}(s_k)=\Pr\{S_k\leq s_k\}$, the copula of $S$ is defined as the function $C_S:[0,1]^K\rightarrow[0,1]$ for which
\begin{align}
C_S(v_1,\dots,v_K) = \Pr\{F_{S_1}(S_1)\leq v_1,\dots,F_{S_K}(S_K)\leq v_K\} .
\end{align}
That is, the probability integral transform is used to convert each marginal $S_k$ into a uniform random variable $V_k=F_{S_k}(S_k)$ and the copula $C_S$ is the joint CDF of $V=[V_1,\dots,V_K]$.
\citet{Messoudi:PR:21} approximate the copula as $\hat{C}_S$ using the empirical marginal CDFs $\hat{F}_{S_k}$ computed using the calibration data, and then search for a $\hat{v}\in [0,1]$ for which $\hat{C}_S(\hat{v},\dots,\hat{v})\geq 1\!-\!\alpha$.
The corresponding $\hat{\lambda}(d\cal)=[\hat{F}_{S_1}^{-1}(\hat{v}),\dots,\hat{F}_{S_K}^{-1}(\hat{v})]\in\Real^K$ then approximately satisfies $F_S(\hat{\lambda}(d\cal))\geq 1\!-\!\alpha$, and $\hat{\lambda}_k(d\cal)$ can be used to define a prediction interval for $\hat{z}_{i,k}$ via \eqref{prediction_set}.
However, they provide no coverage guarantees on these intervals.
\citet{Sun:ICLR:24} and \citet{Park:AISTATS:25} instead search for a $\hat{v}\in [0,1]^K$ for which 
\begin{align}
\hat{v} = \arg\min_{v\in [0,1]^K} 
\sum_{k=1}^K v_k
\text{~such that~} \hat{C}_S(v)\geq 1-\alpha 
\label{eq:min_copula},
\end{align}
after which they set $\hat{\lambda}(d\cal)=[\hat{F}_{S_1}^{-1}(\hat{v}_1),\dots,\hat{F}_{S_K}^{-1}(\hat{v}_K)]\in\Real^K$.
\citet{Park:AISTATS:25} use semiparametric vine copulas and provide coverage guarantees only in the limit of infinite calibration data.
\cite{Sun:ICLR:24} use the empirical copula together with conformal marginal CDFs \citep{Vovk:COPA:17} and obtain a finite-data coverage guarantee similar to \eqref{joint_coverage}. 
But \eqref{min_copula} provides no incentive for balancing coverage across targets (as we will see in \secref{experiments}), and there's no clear way to solve the non-convex optimization in \eqref{min_copula}.
Even approximately solving \eqref{min_copula} is computationally expensive (e.g., \citet{Sun:ICLR:24} use gradient descent).

As an alternative, \citet{Diquigiovanni:JMVA:22} and \citet{Sampson:SADM:24} propose to combine the score components $\{s_{i,k}\}_{k=1}^K$ into a single score via
\begin{equation}
    s_i = \max\{ s_{i,1}, \dots, s_{i,K} \}
    \label{eq:score_max} .
\end{equation}
Using calibration $\{s_i\}_{i=1}^n$ with \eqref{lambda_hat} and extending \eqref{prediction_set} to component-wise intervals
\begin{equation}
    \mc{C}_{\hat{\lambda}(d\cal),k}(\hat{z}_{i}) = \big\{z: s_k(\hat{z}_{i,k},z) \leq \hat{\lambda}(d\cal)\big\},
    \quad k=1,\dots,K
\label{eq:prediction_set2} ,
\end{equation}
the arguments from \citet{Vovk:Book:05} imply that the joint-coverage guarantee \eqref{joint_coverage} and upper bound \eqref{coverage_upper_bound} both hold under the usual exchangeability assumption.
However, \citet{Sampson:SADM:24} note that this approach can disproportionally favor the target components with larger nonconformity scores, causing the prediction intervals of the other components to be overly conservative.
To mitigate this issue, they propose to scale the nonconformity scores to a common range.
To do so, they first train a pair of quantile regressors $\hat{q}_{\frac{\alpha}{2},k}(\cdot)$ and $\hat{q}_{1-\frac{\alpha}{2},k}(\cdot)$ that estimate the $\frac{\alpha}{2}$ and $1\!-\!\frac{\alpha}{2}$ quantile of $Z_{i,k}$, respectively, for each $k\in\{1,\dots,K\}$, and then form the scaled nonconformity scores 
\begin{equation}
    s_{i,k} = \max \big\{ \hat{q}_{\frac{\alpha}{2},k}(\hat{z}_i) - z_{i,k}, z_{i,k} - \hat{q}_{1-\frac{\alpha}{2},k}(\hat{z}_i)\big\} \underbrace{\frac{\hat{q}_{1-\frac{\alpha}{2},1}(\hat{z}_i) - \hat{q}_{\frac{\alpha}{2},1}(\hat{z}_i)}{\hat{q}_{1-\frac{\alpha}{2},k}(\hat{z}_i) - \hat{q}_{\frac{\alpha}{2},k}(\hat{z}_i)}  }_{\text{Scale relative to } 1\text{st target}},
    \label{eq:quantile_normalization}
\end{equation}
where $\hat{z}_i=u_i$.
This helps to balance the single-target coverages $\Pr\{Z_{0,k} \in \mc{C}_{\hat{\lambda}(D\cal),k}(\hat{Z}_{0})\}$ across $k\in\{1,\dots,K\}$ while ensuring the joint-coverage guarantee \eqref{joint_coverage} and upper bound \eqref{coverage_upper_bound}.
In \secref{methods}, we show how to obtain a better balance.

In the broader scope of distribution-free UQ, the Learn-Then-Test (LTT) framework of \cite{Angelopoulos:22a} provide an alternative approach to handling multiple targets that uses multiple hypothesis testing.
Although LTT is generally used for cases in which there are multiple notions of risk, our case involves only a single risk and thus LTT would provide a guarantee of the form
\begin{align}
\Pr \Big[ \Pr \big\{ \cap_{k=1}^{K} Z_{0,k} \in \mc{C}_{\hat{\lambda}(D\cal),k}(\hat{Z}_{0}) \,\big|\, D\cal\big\} \geq 1-\alpha \Big] \geq 1-\delta 
\label{eq:LTT},
\end{align}
where $\alpha, \delta \in (0,1)$ are each user-selected error rates.
In \eqref{LTT}, the inner probability is over the randomness in the test data while the outer probability is over the randomness in the calibration data.
Since the LTT guarantee takes a different form than \eqref{joint_coverage}, the LTT procedure is not directly comparable to any of the previously mentioned multi-target methods. 
Also, the use of two user-selectable error rates in \eqref{LTT} complicates the design. 
In this paper, we focus only on methods that provide joint-coverage guarantees of the form \eqref{joint_coverage}.

\section{Minimax multi-target conformal prediction} \label{sec:methods}

In this section, we propose a new approach to multi-target conformal prediction using a minimax formulation.
We first formulate the minimax problem using random variables. 
Then we present the finite-sample version of our approach, which manifests as an instance of split conformal prediction. 
Finally we prove that the finite-sample solution converges to the solution of the original minimax problem as the number of samples grows to infinity.

\subsection{Random variable perspective} \label{sec:RV}

To build intuition, we first consider the design of prediction sets when the targets and predictions are modeled as random variables $Z = [Z_1,\dots,Z_K]\in\Real^K$ and $\hat{Z} = [\hat{Z}_1,\dots,\hat{Z}_K]$, respectively.
For the $k$th component, suppose that the nonconformity score function is $s_k(\cdot,\cdot)$ and the prediction set is constructed as $\mathcal{C}_{\hat{\zeta}_k}(\hat{Z}_k) \defn \{ z: s_k(\hat{Z}_k,z) \leq \hat{\zeta}_k \}$, 
where $\hat{\zeta}_k$ is a design variable.
Then the ``single-target coverage'' of the $k$th component will be 
\begin{align}
    \Pr\big\{Z_k \in \mathcal{C}_{\hat{\zeta}_k}(\hat{Z}_k)\big\} 
    &= \Pr\big\{ Z_k \in \{ z: s_k(\hat{Z}_k,z) \leq \hat{\zeta}_k \}\big\}
    = \Pr\big\{ s_k(\hat{Z}_k, Z_k) \leq \hat{\zeta}_k \big\}.
\end{align}
Using $S_k \defn s_k(\hat{Z}_k, Z_k)$, we can write the single-target coverage more succinctly as
\begin{align}
    \Pr\big\{ S_k \leq \hat{\zeta}_k\big\}  = F_{S_k}(\hat{\zeta}_k)
    \label{eq:cdf} ,
\end{align}
where $F_{S_k}(\hat{\zeta}_k)$ is the CDF of $S_k$ evaluated at $\hat{\zeta}_k$.
Similarly, the joint coverage of all $K$ components will be
\begin{align}
    \Pr\big\{ \cap_{k=1}^K Z_k \in \mathcal{C}_{\hat{\zeta}_k}(\hat{Z}_k) \big\}
    = \Pr\big\{ \cap_{k=1}^K S_k \leq \hat{\zeta}_k \big\}.
\end{align}

For a given joint miscoverage rate of $\alpha$, we'd like to find a tuple $(\hat{\zeta}_1, \dots, \hat{\zeta}_K)$ that ensures
\begin{equation}
    \Pr\big\{ \cap_{k=1}^K S_k \leq \hat{\zeta}_k \big\}
    \geq 1-\alpha
    \label{eq:score_joint_guarantee} .
\end{equation}
But, in general, many $(\hat{\zeta}_1, \dots, \hat{\zeta}_K)$ will yield the same value of 
$\Pr\{ \cap_{k=1}^K S_k \leq \hat{\zeta}_k \}$, and for some choices of $(\hat{\zeta}_1, \dots, \hat{\zeta}_K)$, a portion of the prediction intervals $\mathcal{C}_{\hat{\zeta}_k}(\hat{Z}_k)$ can be overly conservative.
Since a larger $\Pr\{S_k \leq \hat{\zeta}_k\}$ generally corresponds to a larger prediction interval $\mc{C}_{\hat{\zeta}_k}(\hat{Z}_k)$ due to their monotonically non-decreasing relationship, we propose to design prediction intervals that minimize the maximum single-target coverage while ensuring joint coverage, i.e.,
\begin{align}
    (\hat{\zeta}_1, \dots, \hat{\zeta}_K)
    = \arg \min_{\zeta_1, \dots, \zeta_K}\max_k \Pr\{S_k \leq \zeta_k\} 
    ~~\text{s.t.}~ 
    \Pr\{ \cap_{k=1}^K S_k \leq \zeta_k \}
    \geq 1-\alpha 
    \label{eq:minimax_original}.
\end{align}
Using \eqref{cdf} and the fact that the CDF is non-decreasing, we can restate \eqref{minimax_original} as 
\begin{align}
    (\hat{\zeta}_1, \dots, \hat{\zeta}_K)
    = \arg \min_{\zeta_1, \dots, \zeta_K} \max_k F_{S_k}(\zeta_k)
    ~~ \text{s.t.} ~ 
    \Pr\{ \cap_{k=1}^K F_{S_k}(S_k) \leq F_{S_k}(\zeta_k) \}
    \geq 1-\alpha ,
\end{align}
and further restate it using $\lambda_k \defn F_{S_k}(\zeta_k)$ as 
\begin{align}
    (\hat{\lambda}_1,\dots,\hat{\lambda}_K)
    = \arg\min_{\lambda_1, \dots, \lambda_K} \max_k \lambda_k
    ~~ \text{s.t.} ~ 
    \Pr\{ \cap_{k=1}^K F_{S_k}(S_k) \leq \lambda_k \}
    \geq 1-\alpha 
    \label{eq:minimax_quantile_level}.
\end{align}
Although the solution to \eqref{minimax_quantile_level} may not be unique, it suffices to find a single minimax $(\hat{\lambda}_1, \dots, \hat{\lambda}_K)$.
Towards this aim, observe that 
$\Pr\{ \cap_{k=1}^K F_{S_k}(S_k) \leq \lambda_k \}$
is monotonically non-decreasing with respect to any $\lambda_k$.
Thus, given any tuple $(\lambda_1, \dots, \lambda_K)$ that satisfies the constraint in \eqref{minimax_quantile_level}, the tuple $(\lambda', \dots, \lambda')$ for $\lambda' \defn \max_k \lambda_k$ also satisfies the constraint while simultaneously yielding the same value of the objective ``$\max_k \lambda_k$.'' 
This implies that, without loss of minimax optimality, we can reframe \eqref{minimax_quantile_level} as a search for a single parameter $\hat{\lambda}$:
\begin{align}
    \hat{\lambda}
    = \arg\min_{\lambda} \lambda
    ~~\text{s.t.}~ 
    \Pr\{ \cap_{k=1}^K F_{S_k}(S_k) \leq \lambda \} 
    \geq 1-\alpha 
    \label{eq:minimax_single_parameter}.
\end{align}

Although \eqref{minimax_quantile_level} bears some similarities to the copula methodology in \eqref{min_copula}, note that \eqref{minimax_quantile_level} uses a $\max$ where \eqref{min_copula} uses a sum.
Also, \eqref{minimax_quantile_level} can be reduced to a simple one-dimensional search \eqref{minimax_single_parameter}, whereas the non-convex \eqref{min_copula} involves an expensive multi-variable optimization.

\subsection{Finite-sample case} \label{sec:finite_sample_method}

We now adapt \eqref{minimax_single_parameter} to the practical case where the $S_k$ distributions are unknown and must be learned.
For this purpose, we assume access to ``tuning'' data $\{ (u_i, z_i) \}_{i=n+1}^{n+n\tune}$ that is distinct from the data $\{(u_i,z_i)\}_{i=1}^n$ used for split conformal prediction and from the data used to train the predictor that generates $\hat{z}_i$.

We propose to do the following for each target component $k\in\{1,\dots,K\}$.
First, we construct the set $d\tunek \defn \{ (\hat{z}_{i,k}, z_{i,k})\}_{i=n+1}^{n+n\tune}$ and compute the nonconformity score $s_{i,k}$ for all samples $i$ in $d\tunek$. 
Using these nonconformity scores, we compute the empirical CDF $\hat{F}_{S_k}(\cdot)$, where
\begin{equation}
    \hat{F}_{S_k}(\zeta) = \frac{|\{s_{i,k} : s_{i,k}\leq \zeta,~ i = n\!+\!1,\dots, n\!+\!n\tune\}|}{n\tune}
    \label{eq:score_empirical_cdf} .
\end{equation}
Next, we compute the nonconformity scores $s_{i,k}$ for all samples $i$ in the \emph{calibration} set $d\calk\defn \{ (\hat{z}_{i,k}, z_{i,k})\}_{i=1}^{n}$ and apply the learned $\hat{F}_{S_k}(\cdot)$ to obtain the transformed calibration scores
\begin{equation}
    \bar{s}_{i,k} \defn \hat{F}_{S_k}(s_{i,k}) 
    \text{~for $i=1,\dots,n$}
    \label{eq:transformed_score} .
\end{equation}
Because $\bar{s}_{i,k} \in [0,1]$, the scores $\{\bar{s}_{i,k}\}_{i=1}^n$ are implicitly normalized across $k\in\{1,\dots,K\}$.
Finally, we take the maximum across components,
\begin{equation}
\bar{s}_i \defn \max (\bar{s}_{i,1}, \dots,\bar{s}_{i,K})
\label{eq:max_transformed_score} ,
\end{equation}
and from these $\{\bar{s}_i\}_{i=1}^{n}$ compute $\hat{\lambda}(d\cal)$ in the same manner as \eqref{lambda_hat}: 
\begin{equation}
    \hat{\lambda}(d\cal) = \EmpQuant\Big( \frac{\lceil (1-\alpha)(n+1) \rceil}{n}; \bar{s}_1,\dots,\bar{s}_n \Big)
    \label{eq:lambda_hat2} .
\end{equation}
The target-domain prediction intervals can then be constructed as
\begin{equation}
    \mc{C}_{\hat{\lambda}(d\cal),k}(\hat{z}_{i}) 
    = \big\{z: \hat{F}_{S_k}\big( s_k(\hat{z}_{i,k},z) \big) \leq \hat{\lambda}(d\cal)\big\},
    \quad k=1,\dots,K
\label{eq:prediction_set_joint} .
\end{equation}
Since the tuning data used to construct $\hat{F}_{S_k}(\cdot)$ is distinct from the calibration data, our method follows the framework of split conformal prediction.
By taking the max of the transformed scores, we ensure the inclusion of all the scores, and thus enjoy the finite-sample joint marginal coverage guarantee of \eqref{joint_coverage}, 
as summarized in the following lemma:
\begin{lemma}\label{lem:joint}
For any $\alpha\in (0,1)$, the prediction intervals $\{\mc{C}_{\hat{\lambda}(D\cal),k}(\hat{Z}_0)\}_{k=1}^K$ from \eqref{prediction_set_joint} obey
\begin{align}
\Pr \big\{ \cap_{k=1}^{K} Z_{0,k} \in \mc{C}_{\hat{\lambda}(D\cal),k}(\hat{Z}_{0}) \big\} \geq 1-\alpha 
\end{align}
when $(\hat{Z}_0,Z_0), (\hat{Z}_1,Z_1),\dots,(\hat{Z}_n,Z_n)$ are statistically exchangeable. 
\end{lemma}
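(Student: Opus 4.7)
The plan is to reduce the finite-sample multi-target procedure to an ordinary single-score split conformal prediction problem on the scalar scores $\bar{S}_i = \max_k \hat{F}_{S_k}(s_k(\hat{Z}_{i,k}, Z_{i,k}))$, and then invoke the standard split conformal guarantee.

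First, I would rewrite the joint coverage event. By the definition of the prediction intervals in \eqref{prediction_set_joint}, we have
\begin{align}
\cap_{k=1}^{K} \big\{ Z_{0,k} \in \mc{C}_{\hat{\lambda}(D\cal),k}(\hat{Z}_{0}) \big\}
\;=\; \cap_{k=1}^{K} \big\{ \hat{F}_{S_k}(s_k(\hat{Z}_{0,k}, Z_{0,k})) \leq \hat{\lambda}(D\cal) \big\}
\;=\; \big\{ \bar{S}_0 \leq \hat{\lambda}(D\cal) \big\},
\end{align}
where $\bar{S}_0 \defn \max_k \hat{F}_{S_k}(s_k(\hat{Z}_{0,k}, Z_{0,k}))$ mirrors \eqref{max_transformed_score} applied to the test point. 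So it suffices to show that $\Pr\{\bar{S}_0 \leq \hat{\lambda}(D\cal)\} \geq 1-\alpha$.

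Next, I would condition on the tuning data $D\tune = \{(\hat{Z}_i, Z_i)\}_{i=n+1}^{n+n\tune}$. Conditional on $D\tune$, the functions $\hat{F}_{S_1}, \dots, \hat{F}_{S_K}$ are fixed (deterministic) transformations, and the remaining calibration and test variables $(\hat{Z}_0,Z_0),(\hat{Z}_1,Z_1),\dots,(\hat{Z}_n,Z_n)$ inherit exchangeability from the hypothesis (assuming the tuning set is independent of, or at least exchangeable-preserving with respect to, the calibration and test sets, which is the standard split conformal setup). Applying the same deterministic map $(\hat{z}, z) \mapsto \max_k \hat{F}_{S_k}(s_k(\hat{z}_k, z_k))$ to each pair produces scalar scores $\bar{S}_0, \bar{S}_1, \dots, \bar{S}_n$ that are themselves exchangeable conditional on $D\tune$.

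I would then invoke the classical split conformal result on the scalar scores: since $\hat{\lambda}(D\cal)$ is chosen as the $\lceil(1-\alpha)(n+1)\rceil/n$ empirical quantile of $\bar{S}_1, \dots, \bar{S}_n$ exactly as in \eqref{lambda_hat}, conditional exchangeability of $\bar{S}_0, \dots, \bar{S}_n$ yields $\Pr\{\bar{S}_0 \leq \hat{\lambda}(D\cal) \mid D\tune\} \geq 1-\alpha$. Marginalizing over $D\tune$ preserves this bound, and combining with the rewriting of the joint event from the first step completes the proof.

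I expect the main obstacle to be articulating the conditioning argument cleanly: the transformations $\hat{F}_{S_k}$ are random (through $D\tune$), so one must be careful that exchangeability of the calibration and test pairs is maintained after applying them. Conditioning on $D\tune$ resolves this by freezing the transformations, and then the ordinary exchangeability hypothesis applies directly. Once that reduction is in place, the remainder is simply the textbook split conformal guarantee, so no novel probabilistic machinery is needed.
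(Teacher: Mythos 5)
Your proposal is correct and takes essentially the same route as the paper. The paper does not give a displayed proof of this lemma: it states the claim immediately after the one-line justification that ``by taking the max of the transformed scores, we ensure the inclusion of all the scores,'' and the subsequent remark that ``the finite-sample guarantee in Lemma~1 is conditional on the tuning samples'' makes explicit exactly the conditioning step you use. Your three-step argument --- rewrite the joint event as $\{\bar{S}_0 \le \hat{\lambda}(D\cal)\}$, condition on $D\tune$ so that $\hat F_{S_1},\dots,\hat F_{S_K}$ are fixed measurable maps applied identically to each exchangeable pair, then invoke the textbook split conformal quantile bound and marginalize --- is precisely the reasoning the paper has in mind and is the right way to flesh it out.
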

The guarantee in \lemref{joint} is similar to that in \citet{Sampson:SADM:24} and stronger than that in \citet{Sun:ICLR:24}, since the latter requires exchangeability of the test, calibration, \emph{and tuning} samples.
(In fact the proof in \citet{Sun:ICLR:24} makes the stronger assumption that the test, calibration, and tuning samples are i.i.d.)
In other words, the finite-sample guarantee in \lemref{joint} is conditional on the tuning samples whereas that in \citet{Sun:ICLR:24} is not.
Furthermore, we demonstrate in \secref{experiments} that, as a result of our minimax formulation, our coverages tend to be more balanced (across targets) than those of \citet{Sampson:SADM:24} and \citet{Sun:ICLR:24}, which prevents our prediction intervals from being unnecessarily large.

\algref{minimax} summarizes our proposed split-conformal prediction procedure.

\begin{algorithm}[t]
\caption{Minimax-based conformal prediction of test target $z_0\in\Real^K$ from feature vector $u_0\in\mc{U}$.}
\label{alg:minimax}
\begin{algorithmic}[1]
\Require{%
Error rate $\alpha\in(0,1)$.\\
Test feature vector $u_0$.\\
Prediction model $h:\mc{U}\rightarrow\Real^K$.\\
Data $\{(u_i,z_i)\}_{i=1}^{n+n\tune}$ not used to train $h(\cdot)$.\\
Nonconformity score functions $s_k:\Real\times\Real\rightarrow\Real$ for $k=1,\dots,K$.
}
\State{Compute the predictions $\{\hat{z}_i\}_{i=0}^{n+n\tune}$ using $\hat{z}_i=h(u_i)$.}
\For{$k=1,\dots,K$}
    \State{Compute the nonconformity scores $\{s_{i,k}\}_{i=1}^{n+n\tune}$ using $s_{i,k}=s_k(\hat{z}_{i,k},z_{i,k})$.}
    \State{Compute the empirical CDF $\hat{F}_{S_k}(\cdot)$ using $\displaystyle \hat{F}_{S_k}(\zeta) = \frac{|\{s_{i,k} : s_{i,k}\leq \zeta,~ i = n\!+\!1,\dots, n\!+\!n\tune\}|}{n\tune}.$}
    \State{Compute the transformed nonconformity scores $\{\bar{s}_{i,k}\}_{i=1}^{n}$ using $\bar{s}_{i,k}=\hat{F}_{S_k}(s_{i,k})$.}
\EndFor
\State{Compute the component-maximized scores $\{\bar{s}_i\}_{i=1}^{n}$ using $\bar{s}_i=\max(\bar{s}_{i,1},\dots,\bar{s}_{i,K})$.}
\State{Compute the threshold $\displaystyle \hat{\lambda}(d\cal) = \EmpQuant\Big( \frac{\lceil (1\!-\!\alpha)(n\!+\!1) \rceil}{n}; \bar{s}_1,\dots,\bar{s}_n \Big)$.}
\State{Compute the prediction intervals
    $\mc{C}_{\hat{\lambda}(d\cal),k}(\hat{z}_{i}) 
    = \big\{z: \hat{F}_{S_k}\big( s_k(\hat{z}_{i,k},z) \big) \leq \hat{\lambda}(d\cal)\big\}$ 
    for $k=1,\dots,K$.}
\State \Return Prediction interval $\mc{C}_{\hat{\lambda}(d\cal),k}(\hat{z}_0)$ on unknown $z_{0,k}$ for each $k=1,\dots,K$.
\end{algorithmic}
\end{algorithm}

\subsection{Asymptotically minimax}

Because the proposed finite-sample methodology uses the empirical CDF \eqref{score_empirical_cdf} in place of the true CDF $F_{S_k}$ and the empirical quantile \eqref{max_transformed_score}-\eqref{lambda_hat2} in place of the true quantile in \eqref{minimax_single_parameter}, one may wonder how the calibration parameter $\hat{\lambda}(d\cal)$ in \eqref{lambda_hat2} relates to the minimax $\hat{\lambda}$ in \eqref{minimax_single_parameter}.
Below we show that $\hat{\lambda}(d\cal)$ converges to $\hat{\lambda}$ in the limit of infinite tuning and calibration data.

\begin{theorem} \label{thm:minimax_convergence}
    For each target component $k=1,\dots,K$, suppose that the nonconformity scores 
    $\{S_{i,k}\}_{i=1}^{n+n\tune}$
    are i.i.d with CDF $F_{S_k}(\cdot)$ and,
    for $T\defn \max_kF_{S_k}(S_k)$, suppose that $F_{T}(\cdot)$ is continuous and strictly increasing at the $(1\!-\!\alpha)$-level quantile of $T$.
    Then $\hat{\lambda}(d\cal)$ from \eqref{lambda_hat2} converges to $\hat{\lambda}$ from \eqref{minimax_single_parameter} almost surely as $n \rightarrow \infty$ and $n\tune \rightarrow \infty$.
    
\end{theorem}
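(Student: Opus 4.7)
The plan is to reduce the finite-sample object $\hat{\lambda}(d\cal)$ to an empirical quantile of the oracle max-score $T_i \defn \max_k F_{S_k}(S_{i,k})$ via two applications of the Glivenko--Cantelli theorem: one controlling the tuning-based CDF estimates $\hat{F}_{S_k}$, and one driving the empirical quantile to the true one. As a preliminary, note that the constraint in \eqref{minimax_single_parameter} rewrites as $F_T(\lambda) \geq 1-\alpha$, so under the hypothesis that $F_T$ is continuous and strictly increasing at its $(1-\alpha)$-quantile, the minimax solution is uniquely characterized as $\hat{\lambda} = F_T^{-1}(1-\alpha)$.

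For the first Glivenko--Cantelli step, since $\{S_{i,k}\}_{i=n+1}^{n+n\tune}$ is i.i.d.\ from $F_{S_k}$ for each $k$, we have $\sup_\zeta |\hat{F}_{S_k}(\zeta) - F_{S_k}(\zeta)| \to 0$ a.s.\ as $n\tune \to \infty$. Taking the max over the finite index set $k\in\{1,\dots,K\}$, the scalar $\epsilon_{n\tune} \defn \max_k \sup_\zeta |\hat{F}_{S_k}(\zeta) - F_{S_k}(\zeta)|$ tends to $0$ a.s. Using the elementary inequality $|\max_k a_k - \max_k b_k| \leq \max_k |a_k - b_k|$, the transformed calibration scores $\bar{S}_i = \max_k \hat{F}_{S_k}(S_{i,k})$ satisfy $|\bar{S}_i - T_i| \leq \epsilon_{n\tune}$ uniformly in $i \in \{1,\dots,n\}$. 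Because EmpQuant is monotone and hence $\epsilon$-Lipschitz with respect to a uniform $\epsilon$-shift of its inputs, we conclude
\begin{align*}
|\hat{\lambda}(D\cal) - \hat{q}_n| \leq \epsilon_{n\tune},
\end{align*}
where $\hat{q}_n$ denotes the $\lceil (1-\alpha)(n+1) \rceil / n$ empirical quantile of the oracle sample $\{T_1,\dots,T_n\}$.

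The second Glivenko--Cantelli step handles $\hat{q}_n$ itself. The $T_i$ are i.i.d.\ from $F_T$, and the quantile level $\lceil (1-\alpha)(n+1) \rceil / n$ converges to $1-\alpha$. The classical consistency result for sample quantiles at a varying level---a direct consequence of uniform convergence of the empirical CDF of $T$ combined with continuity and local strict monotonicity of $F_T$ at $\hat{\lambda}$---then gives $\hat{q}_n \to \hat{\lambda}$ a.s.\ as $n \to \infty$. Combining via the triangle inequality, $|\hat{\lambda}(D\cal) - \hat{\lambda}| \leq \epsilon_{n\tune} + |\hat{q}_n - \hat{\lambda}|$, and both terms vanish almost surely as $n, n\tune \to \infty$. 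Independence of the tuning and calibration splits makes the intersection of the two a.s.\ events automatic.

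The main obstacle is the last step: controlling a sample quantile evaluated at a sequence of levels slightly above $1-\alpha$ when only continuity and local strict monotonicity of $F_T$ are available (no density is assumed). The hypothesis on $F_T$ is precisely what rules out atoms or flat plateaus at the $(1-\alpha)$-quantile, which would otherwise allow the empirical quantile to drift away from $\hat{\lambda}$. A secondary, but minor, bookkeeping issue is ensuring that the two limits $n \to \infty$ and $n\tune \to \infty$ can be taken jointly; because the bound $\epsilon_{n\tune}$ does not depend on $n$ and the bound $|\hat{q}_n - \hat{\lambda}|$ does not depend on $n\tune$, the joint convergence follows from their separate a.s.\ convergences.
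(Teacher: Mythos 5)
Your proof is correct and follows the same two-stage skeleton as the paper's (Glivenko--Cantelli to control the tuning-based CDF estimates, then consistency of the sample quantile of $T$ at the slightly inflated level), but it handles the coupling between the two limits more carefully, and that difference is worth noting. The paper's appendix argues sequentially: it fixes $n$, sends $n\tune\to\infty$ to conclude $\bar{S}_i\to T_i$ a.s. for each $i$ (via the continuous mapping theorem applied to the max), and then silently replaces $\hat{\lambda}(D\cal)$ by the empirical quantile of $\{T_i\}$ before sending $n\to\infty$ and invoking its Theorem~\ref{thm:emp_quant_to_true_quant}. That swap is only justified by an iterated-limit argument, which is weaker than the joint limit the theorem statement asserts. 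Your replacement of the continuous-mapping step by the explicit uniform bound
\[
|\bar{S}_i - T_i| \;\leq\; \epsilon_{n\tune} \;\defn\; \max_k \sup_\zeta \bigl|\hat{F}_{S_k}(\zeta) - F_{S_k}(\zeta)\bigr|,
\]
together with the order-statistic Lipschitz fact ($|a_i-b_i|\leq\epsilon$ for all $i$ implies $|a_{(j)}-b_{(j)}|\leq\epsilon$ for all $j$), yields $|\hat{\lambda}(D\cal)-\hat{q}_n|\leq\epsilon_{n\tune}$ with the first term depending only on $n\tune$ and the second only on $n$, so the triangle inequality gives a genuine joint limit with no ordering of limits required. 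This is a cleaner and strictly more rigorous treatment than the appendix's. The one place where you defer to the literature rather than prove---consistency of the sample quantile at a level $\gamma_n\to 1-\alpha$ under continuity and local strict monotonicity of $F_T$---is exactly the content the paper proves from scratch as its Theorem~\ref{thm:emp_quant_to_true_quant}; you correctly identify it as the crux and correctly note that the hypothesis on $F_T$ is what rules out plateaus. One small overstatement: independence of the tuning and calibration splits is not needed for the two a.s.\ events to intersect a.s.; the intersection of any two probability-one events has probability one.
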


See \appref{minimax_proof} for a proof.

\section{Applications of multi-target conformal prediction in imaging} \label{sec:applications}

In \secref{imaging_background}, we described several applications of conformal prediction to single-target UQ in imaging inverse problems.
In this section, we propose several applications of conformal prediction to multi-target UQ in imaging inverse problems.

We now establish a common notation that can be used across several applications.
Consider a target vector $z_i \in\Real^K$, prediction matrix $\hat{z}_i \in \Real^{K \times c}$, and nonconformity score $s_{i,k} = s_k(\hat{z}_{i,k},z_{i,k})$ for the $k$th target and $i$th sample. 
With CQR, the nonconformity score would be computed as  
\begin{align}
    s_{i,k} = s_k(\hat{z}_{i,k},z_{i,k}) = \max \big\{ \hat{q}_{\frac{\alpha}{2},k}(\hat{z}_{i,k}) - z_{i,k}, z_{i,k} - \hat{q}_{1-\frac{\alpha}{2},k}(\hat{z}_{i,k})\big\}
    \label{eq:multi_task_score} 
\end{align}
with
\begin{align}
    \hat{q}_{\frac{\alpha}{2},k}(\hat{z}_{i,k}) = \EmpQuant\Big(\frac{\alpha}{2}; \hat{z}_{i,k}\of{1}, \dots, \hat{z}_{i,k}\of{c}\Big) \quad \text{and} \quad \hat{q}_{1-\frac{\alpha}{2},k}(\hat{z}_{i,k}) = \EmpQuant\Big(1-\frac{\alpha}{2}; \hat{z}_{i,k}\of{1}, \dots, \hat{z}_{i,k}\of{c}\Big)
    \label{eq:multi_task_quant} .
\end{align}
With this problem setup, the proposed minimax method from \secref{methods}, or any of the existing multi-target approaches discussed in \secref{background_multi_target}, can be applied to generate prediction sets $\mc{C}_{\hat{\lambda}(d\cal),k}(\hat{z}_{0})$ for target indices $k=1,\dots,K$.
When the joint-coverage guarantee in \eqref{joint_coverage} holds, all $K$ prediction sets will simultaneously include the corresponding true-target values with a probability of at least $1\!-\!\alpha$.
We emphasize that this guarantee holds regardless of the quality of the approximate posterior sampler used to generate $\{\hat{z}_{i,k}\}_{j=1}^c$.
As this sampler gets worse, the prediction sets will get looser but the guarantee will still hold. 
Below, we describe how to construct these quantities in different applications.

\subsection{Multi-metric blind FRIQ assessment} \label{sec:multi_metric}
We first consider blind FRIQ assessment, where the goal is to estimate the FRIQ of a reconstruction $\hat{x}_0=f(y_0)$ relative to the true image $x_0$, given measurements $y_0=\mathcal{A}(x_0)$ but no direct access to $x_0$.
Whereas \secref{imaging_background} discussed the use of a single FRIQ metric, one may instead be interested in assessing image quality according several FRIQ metrics, since different metrics may be complementary \citep{Wang:SPM:11}.

Consider the case of $K$ FRIQ metrics $\{m_k(\cdot, \cdot)\}_{k=1}^K$.
To extend the conformal prediction approach from \secref{imaging_background}, we set the target vector as $z_i = [z_{i,1}, \dots, z_{i,K}]\tran\in\Real^K$ with $z_{i,k}=m_k(\hat{x}_i,x_i)$, and use $c$ posterior samples $\{\tilde{x}_i\of{j}\}_{j=1}^c$ to form the prediction matrix $\hat{z}_i = [\hat{z}_{i,1}, \dots, \hat{z}_{i,K}]\tran \in \Real^{K \times c}$, where $\hat{z}_{i,k} = [m_k(\hat{x}_i, \tilde{x}_{i}\of{1}), \dots, m_k(\hat{x}_i, \tilde{x}_{i}\of{c}) ]\tran$.

\subsection{Multi-task uncertainty quantification} \label{sec:multi_task}

Now consider task-based UQ, as described in \secref{imaging_background} for the case of a single task.
In practice, one may want to consider several tasks, such as classifying the presence/absence of several different pathologies from a single image.
To extend the task-based UQ method from \secref{imaging_background} to $K$ downstream tasks $\{\mu_k(\cdot)\}_{k=1}^K$, we form the target vector as $z_i = [z_{i,1}, \dots, z_{i,K}]\tran \in \Real^{K}$ with $z_{i,k}=\mu_k(x_i)$ and use $c$ posterior samples $\{\tilde{x}_i\of{j}\}_{j=1}^c$ to form the prediction matrix $\hat{z}_i = [\hat{z}_{i,1}, \dots, \hat{z}_{i,K}]\tran \in \Real^{K \times c}$, where $\hat{z}_{i,k} = [\mu_k(\tilde{x}_{i}\of{1}), \dots, \mu_k(\tilde{x}_{i}\of{c}) ]\tran$.

\subsection{Multi-round measurement acquisition} \label{sec:multi_round}

In \citet{Wen:ECCV:24}, the authors propose a task-based multi-round measurement protocol where measurements are gradually collected until the conformal interval length falls below a user-specified threshold $\tau$. 
More precisely, at the end of each measurement round $b\in\{1,\dots,B\}$, a prediction $\hat{z}_0^{[b]}\in\Real$ and conformal prediction interval $\mc{C}^{[b]}(\hat{z}_0^{[b]})$ are constructed from the cumulative measurements $y_0^{[b]}$.
Measurement collection stops if $|\mc{C}^{[b]}(\hat{z}_0^{[b]})|\leq\tau$ (or if $b=B$) but otherwise continues to the next round.
The goal is to reduce measurement costs while guaranteeing that the collected measurements are sufficient for the task.
This is especially useful in applications like accelerated MRI, where long scan times increase both patient discomfort and the likelihood of motion artifacts \citep{Knoll:SPM:20}.

A limitation of the multi-round protocol from \citet{Wen:ECCV:24} is that the marginal coverage guarantee holds for each round in isolation, but not for the multi-round protocol as a whole.
In the sequel, we refer to the multi-round protocol from \citet{Wen:ECCV:24} as the ``separate calibration'' (SC) method, because it is implemented using a bank of $B$ independently calibrated conformal predictors.
That is, although the SC method ensures that $\Pr\{Z_0\in \mathcal{C}^{[b]}(\hat{Z}_0^{[b]})\}\geq 1-\alpha$ for each round $b$ assuming $(Z_0,\hat{Z}_0^{[b]}),\dots,(Z_n,\hat{Z}_n^{[b]})$ are exchangeable, we really desire that the multi-round coverage 
\begin{align}
P\multi
&\defn \sum_{b=1}^B \Pr\big\{
    Z_0\in \mathcal{C}^{[b]}(\hat{Z}_0^{[b]}),
    \mathrm{final~round} = b\big\} 
    \label{eq:multi_round1} \\
&= \Pr\big\{
    Z_0\in \mathcal{C}^{[1]}(\hat{Z}_0^{[1]}), 
    |\mathcal{C}^{[1]}(\hat{Z}_0^{[1]})|\leq \tau \big\} 
    \nonumber \\ &\quad
+ \sum_{b=2}^{B-1} \Pr\big\{
    Z_0\in \mathcal{C}^{[b]}(\hat{Z}_0^{[b]}),
    |\mathcal{C}^{[b]}(\hat{Z}_0^{[b]})|\leq \tau, 
    |\mathcal{C}^{[b-1]}(\hat{Z}_0^{[b-1]})|> \tau, \dots,
    |\mathcal{C}^{[1]}(\hat{Z}_0^{[1]})|> \tau 
    \big\} 
    \nonumber \\ &\quad
+ \Pr\big\{
    Z_0\in \mathcal{C}^{[B]}(\hat{Z}_0^{[B]}),
    |\mathcal{C}^{[B-1]}(\hat{Z}_0^{[B-1]})|> \tau, \dots,
    |\mathcal{C}^{[1]}(\hat{Z}_0^{[1]})|> \tau 
    \big\} 
\end{align}
is at least $1\!-\!\alpha$. 
This is because, in practice, we don't apriori know which round will be the accepted round, and thus we must ensure coverage in all cases.
To address this limitation, we note from \eqref{multi_round1} that
\begin{align}
P\multi
&\geq \sum_{b=1}^B \Pr\big\{
    \cap_{k=1}^B Z_0\in \mathcal{C}^{[k]}(\hat{Z}_0^{[k]}), 
    \text{final round} = b\big\} 
= \Pr\big\{ \cap_{k=1}^B Z_0\in \mathcal{C}^{[k]}(\hat{Z}_0^{[k]}) \big\} 
    \label{eq:multi_round2} ,
\end{align}
with equality due to the fact that $\sum_{b=1}^B \Pr\{\mathrm{final~round}=b\}=1$.
Thus, if 
\begin{align}
\Pr\big\{ \cap_{b=1}^B Z_0\in \mathcal{C}^{[b]}(\hat{Z}_0^{[b]}) \big\} \geq 1-\alpha 
    \label{eq:multi_round3} ,
\end{align}
then it follows that $P\multi\geq 1\!-\!\alpha$.
Since \eqref{multi_round3} is a special case of the joint marginal coverage guarantee \eqref{joint_coverage}, we can ensure \eqref{multi_round3} using multi-target conformal prediction techniques like the one proposed in \secref{methods}.

That said, the above $B$-round protocol handles only scalar $z_0\in\Real$, i.e., a single task.
To extend it to $L$ tasks $\{\mu_l(\cdot)\}_{l=1}^L$, 
we set the target vector as $z_i = [z_{i,1}, \dots, z_{i,K}]\tran \in \Real^{K}$, where $K=BL$ and
$z_{i,L(b-1) + l}=\mu_{l}(x_i)$ for all $b$.
Then, for each round $b$, we generate $c$ posterior samples $\{ \tilde{x}_i^{[b](j)}\}_{j=1}^{c}$ via $\tilde{x}_i^{[b](j)} = g(y_i^{[b]}, \tilde{v}_i^{[b](j)})$ with i.i.d $\tilde{v}_i^{[b](j)}\sim\mc{N}(0,I)$ and form the predictions $\hat{z}_i = [\hat{z}_{i,1}, \dots, \hat{z}_{i,K}]\tran \in \Real^{K \times c}$ with $\hat{z}_{i,L(b-1) + l}  = [\mu_l(\tilde{x}_{i}^{[b](1)}), \dots, \mu_l(\tilde{x}_{i}^{[b](c)})]\tran$.
When the joint-coverage guarantee in \eqref{joint_coverage} holds, the prediction intervals for all tasks will simultaneously contain their respective targets with probability at least $1\!-\!\alpha$ in the final measurement round.

\section{Numerical experiments} \label{sec:experiments}

We now numerically evaluate the proposed asymptotically minimax multi-target conformal prediction method from \secref{finite_sample_method}, along with the independence-assumption (IA)-based method from \citet{Messoudi:COPA:20}, the quantile-normalization (QN)-based method from \citet{Sampson:SADM:24}, and the copula-based ``CPTS'' method from \citet{Sun:ICLR:24}, all described in \secref{background_multi_target}. 
We compare all four methods using both synthetic data and real-world accelerated-MRI data.
For $\hat{q}_{\frac{\alpha}{2},k}(\cdot)$ and $\hat{q}_{1-\frac{\alpha}{2},k}(\cdot)$, we use the empirical quantile estimator \eqref{empirical_quantile} for all MRI experiments in \secref{mri_experiments} but train a quantile regression \citep{Koenker:ECON:78} model for the synthetic experiments in \secref{toy}. 
For the IA method, we use the CQR nonconformity score from \eqref{quantile_nonconformity} with an adjusted error-rate of $\alpha_1=1-(1\!-\!\alpha)^{\frac{1}{K}}$ to provide a joint-coverage rate of $1\!-\!\alpha$.
For the QN method, we compute the nonconformity scores for each target using \eqref{quantile_normalization} before taking the max across targets in \eqref{score_max}. 
For the CPTS method, we use the CQR nonconformity score from \eqref{quantile_nonconformity} but otherwise use the implementation settings from \citet{Sun:ICLR:24}.
Since our minimax method can be applied with any nonconformity score $s_k(\cdot,\cdot)$, we test one variation with the CQR score \eqref{quantile_nonconformity}, denoted as ``CQR+Minimax,'' and one with the QN score \eqref{quantile_normalization}, denoted as ``QN+Minimax.''

For evaluation, we first randomly draw a fixed tuning dataset $d\tune$ from the non-training data.
Since the joint guarantee \eqref{joint_coverage} holds over the randomness in the calibration and test data, we perform $T=10\,000$ Monte Carlo trials, where in each trial $t \in \{1, \dots, T\}$ we randomly partition the remaining non-training data into a calibration set $d\cal[t]$ with indices $i\in I\cal[t]$ and a test set with indices $i \in I\test[t]$.
As a coverage metric, we evaluate the empirical joint coverage 
\begin{align}
    \EJC
    \defn \frac{1}{T}\sum_{t=1}^T \frac{1}{|\mc{I}\test[t]|} \sum_{i \in \mc{I}\test[t]} \prod_{k \in \{1, \dots, K \} }\mathds{1}\{z_{i,k} \in  \mc{C}_{\hat{\lambda}(d\cal[t]),k}(\hat{z}_{i}) \} 
    \label{eq:joint_ECt},
\end{align}
where $\mathds{1}\{\cdot\}$ is the indicator function.
Since a larger coverage generally indicates a more conservative prediction interval for a given nonconformity score, we also measure the empirical single-target coverage for each target $k$:
\begin{align}
    \ESC_k
    \defn \frac{1}{T}\sum_{t=1}^T \frac{1}{|\mc{I}\test[t]|} \sum_{i \in \mc{I}\test[t]} \mathds{1}\{z_{i,k} \in  \mc{C}_{\hat{\lambda}(d\cal[t]),k}(\hat{z}_{i}) \} 
    \label{eq:ECt}
\end{align}
and, to quantify the corresponding prediction-interval size, we measure the mean interval length:
\begin{align}
    \MIL_k \defn \frac{1}{T}\sum_{t=1}^T \frac{1}{|\mc{I}\test[t]|} \sum_{i \in \mc{I}\test[t]} |\mc{C}_{\hat{\lambda}(d\cal[t]),k}(\hat{z}_{i})| .
\end{align}

\subsection{Synthetic data} \label{sec:toy}
We start with a synthetic multi-target regression problem.
The random data $\{Z_i\}$ are constructed i.i.d across $i\in\{0,1,\dots,n+n\tune+n\test+n\train\}$ with $Z_i = [Z_{i,1}, Z_{i,2}, Z_{i,3}]\tran$,
\begin{subequations} \label{eq:synthetic}
\begin{align}
Z_{i,1} &= 10 U_i + 10 + \epsilon_{i,1} \\
Z_{i,2} &= -2 U_i  + 1 + \epsilon_{i,2} \\
Z_{i,3} &= 0.1 U_i^2 + \epsilon_{i,3} ,
\end{align}
\end{subequations}
$U_i \sim \text{Unif}(-5,5)$, and 
two cases of random $\epsilon_i=[\epsilon_{i,1},\epsilon_{i,2},\epsilon_{i,3}]\tran$. 
In the first case, 
$\epsilon_{i,1} \sim \mc{N}(10,1)$, 
$\epsilon_{i,2} \sim \text{Gamma}(\text{shape}=1,\text{scale}=1)$, and 
$\epsilon_{i,3} \sim \text{Exp}(\text{scale}=1)$
are independent. 
In the second case, $\epsilon_i$ is formed by stacking those three independent noise variables into a vector and multiplying by the Cholesky factor of  
\begin{equation}
\Sigma = 
\begin{bmatrix}
1 & 0.8 & 0.7 \\
0.8 & 1 & 0.4 \\
0.7 & 0.4 & 1
\end{bmatrix}
\end{equation}
to construct a correlated noise vector.

In either case, we generate $n\train=10\,000$ training samples. 
For each target component $k$, we train two linear quantile regressors $\hat{q}_{\frac{\alpha}{2},k}(\cdot)$ and $\hat{q}_{1-\frac{\alpha}{2},k}(\cdot)$ to estimate the $\frac{\alpha}{2}$ and $1\!-\!\frac{\alpha}{2}$ quantiles of $Z_{i,k}$, respectively, from $\hat{z}_i=u_i$.
We generate $n\tune=10\,000$ tuning samples to estimate the empirical CDF for the minimax approach (see \eqref{score_empirical_cdf}) and CPTS approach (see \citet[Eq.(5)]{Sun:ICLR:24}).
Then, for each Monte Carlo trial $t$, we generate $n=10\,000$ calibration samples and $n\test=5000$ test samples.
For fairness, the IA and QN methods use the tuning samples as additional calibration samples.

\begin{table}
    \centering
    \caption{Empirical joint coverage versus desired joint coverage $1\!-\!\alpha$ for the synthetic experiment.}
    \resizebox{0.8\columnwidth}{!}{%
    \begin{tabular}{cccccccc}
        \toprule
        & & \multicolumn{6}{c}{$1-\alpha$} \\
        \cmidrule(lr){3-8}
         Noise Type & Method & 0.70 & 0.75 & 0.80 & 0.85 & 0.90 & 0.95 \\
         \midrule
        & IA \citep{Messoudi:COPA:20} & $0.7000$ & $0.7501$ & $0.8001$ & $0.8501$ & $0.9000$ & $0.9499$ \\

        & CPTS \citep{Sun:ICLR:24} & $0.7002$ & $0.7501$ & $0.8001$ & $0.8501$ & $0.9001$ & $0.9501$ \\
                 
        Independent & QN \citep{Sampson:SADM:24} & $0.7002$ & $0.7501$ & $0.8001$ & $0.8501$ & $0.9000$ & $0.9500$ \\
                 
        & CQR+Minimax (Ours) & $0.7000$ & $0.7500$ & $0.8000$ & $0.8500$ & $0.9000$ & $0.9501$ \\
                 
        & QN+Minimax (Ours) & $0.7002$ & $0.7502$ & $0.8001$ & $0.8501$ & $0.9000$ & $0.9501$ \\
         \cmidrule(lr){2-8}
         & IA \citep{Messoudi:COPA:20} & $0.7583$ & $0.7991$ & $0.8393$ & $0.8791$ & $0.9187$ & $0.9584$ \\

         & CPTS \citep{Sun:ICLR:24} & $0.7001$ & $0.7502$ & $0.8001$ & $0.8501$ & $0.9001$ & $0.9502$ \\
         
        Correlated & QN \citep{Sampson:SADM:24} & $0.7001$ & $0.7501$ & $0.8001$ & $0.8500$ & $0.9000$ & $0.9500$ \\
                 
        & CQR+Minimax (Ours) & $0.7000$ & $0.7501$ & $0.8001$ & $0.8500$ & $0.9000$ & $0.9500$ \\
                 
        & QN+Minimax (Ours) & $0.7001$ & $0.7502$ & $0.8001$ & $0.8501$ & $0.9001$ & $0.9501$ \\
         \bottomrule
    \end{tabular}%
    }
    \label{tab:ejc_toy}
\end{table}

\textbf{Empirical coverage:}~
\Tabref{ejc_toy} shows the EJC \eqref{joint_ECt} for each method versus the desired joint coverage $1\!-\!\alpha$.  
There we see that, in the independent-noise case, all four methods perform nearly identically, with EJCs that almost exactly match the desired $1\!-\!\alpha$.
In the correlated-noise case, however, the IA method yields an overly conservative EJC while the QN, CPTS, and minimax methods give an EJC of almost exactly the desired $1\!-\!\alpha$.
This behavior is not surprising, since the IA method assumes independent target components while the other methods do not.

\textbf{Single-target coverage:}~
\Figref{toy_problem}(a) and (b) plot the $\max_k \ESC_k$ and $\min_k \ESC_k$ of each method versus the desired joint coverage $1\!-\!\alpha$ for the independent- and correlated-noise cases, respectively.
With independent noise, the QN and CPTS methods result in the largest $\ESC$ spread (i.e., $\max_k\ESC_k-\min_k\ESC_k$), especially at $1\!-\!\alpha=0.8$. 
With correlated noise, the IA method yields much more conservative $\ESC_k$s than the other methods, while the QN and CPTS methods result in the largest $\ESC$ spread, especially for $1\!-\!\alpha\leq 0.75$.

\textbf{Sensitivity to quantile-estimator quality:}~
We now investigate the effect of quantile-estimator quality, which can be difficult to guarantee in practice.
To do this, we vary the number of training samples $n\train$ used to train the quantile regressor. 
Since, when $n\train$ is small, the performance can vary significantly over different draws of the training set, we run the experiment five times.
\Figref{toy_problem}(c) plots the mean 
(across these $5$ runs) of $\max_k \ESC_k$ and $\min_k \ESC_k$ versus $n\train\in\{50, 500, 1000, 2000\}$ with $\alpha=0.1$ and correlated noise.
The QN method shows a large $\ESC_k$ spread for small values of $n\train$, while the other methods show robustness to $n\train$. 

\textbf{Sensitivity to tuning samples:}~
Since the CPTS and minimax methods rely on tuning samples, we now investigate $\ESC_k$ as the number of tuning samples $n\tune$ is varied.
(Recall that the IA and QN methods use these tuning samples as additional calibration samples.)
Again we run the experiment five times, drawing new training and tuning sets each time.
\Figref{toy_problem}(d) plots the mean (across runs) of $\max_k \ESC_k$ and $\min_k \ESC_k$ versus $n\tune\in\{50, 500, 1000, 2000, 5000, 10000\}$ with $\alpha=0.1$ and correlated noise.
As expected, the CPTS and minimax methods perform poorly when $n\tune$ is too small.
For sufficiently large $n\tune$, both the CPTS and minimax methods perform on par with QN. 
But to reach this level of performance, CPTS requires more samples than the minimax methods (in this case, $10\,000$ versus $5000$).

In summary, the IA method is sensitive to noise correlation, while the other methods are not.
Meanwhile, the QN method is sensitive to poor quantile estimators (e.g., from too-few training samples) while CPTS and the proposed minimax methods are sensitive to too-few tuning samples.
Since conformal prediction is often applied post-hoc, where one has significant control over the tuning/calibration process but no control over the training, CPTS and the proposed minimax methods can be advantageous. 
We will see this behavior arise in the subsequent MRI experiments, where we have little control over the quantile-estimator performance.
Finally, \figref{toy_problem} suggests that the proposed minimax methods offer more balanced single-target coverages than CPTS. 

\begin{figure}
    \centering
    \includegraphics[width=1.0\linewidth]{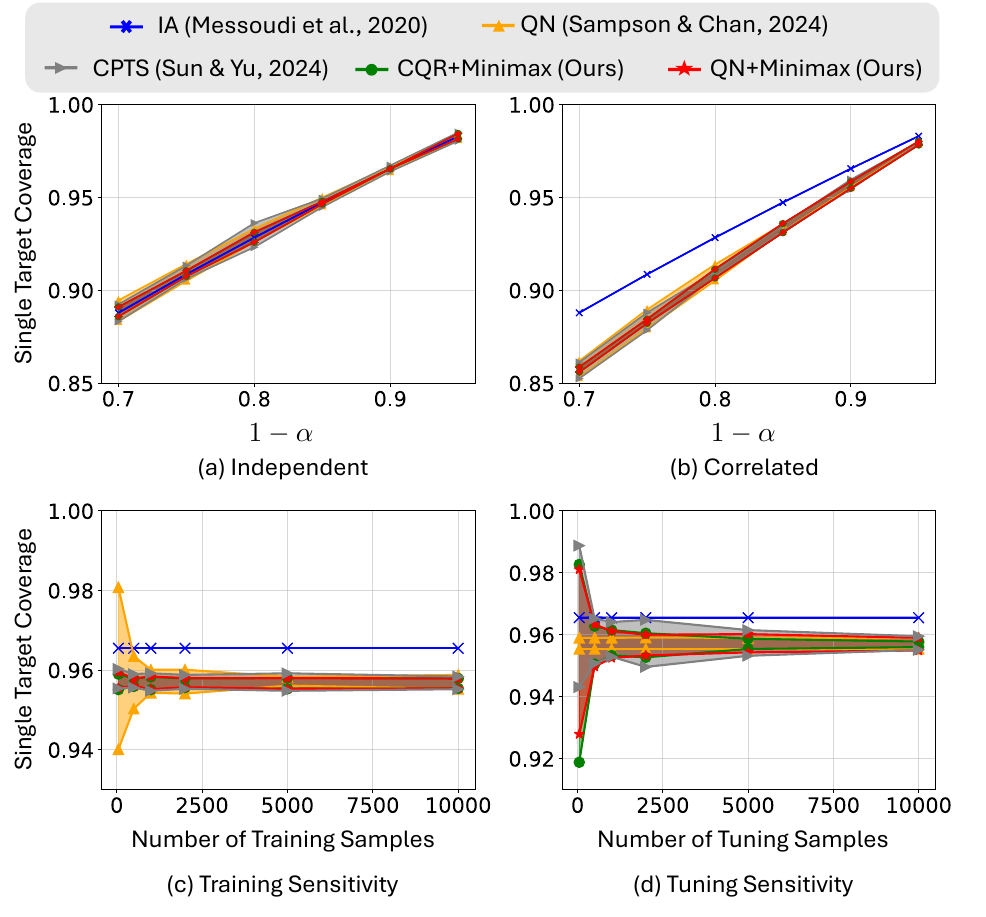}
    \caption{For the synthetic experiment, (a) shows $\min_k\ESC_k$ and $\max_k\ESC_k$ versus desired joint coverage $1\!-\!\alpha$ for the independent-noise case and (b) shows the same for the correlated-noise case.  Then for the correlated-noise case, (c) shows $\min_k\ESC_k$ and $\max_k\ESC_k$ versus $n\train$ at $1\!-\!\alpha=0.9$ and (d) shows the same versus $n\tune$. 
    The traces in (c) and (d) represent the average across 5 draws of training and tuning data.
    In some cases the green curves are hidden behind the red curves.}
    \label{fig:toy_problem}
\end{figure}

\subsection{Magnetic resonance imaging} \label{sec:mri_experiments}

We now compare all five methods on experiments with accelerated magnetic resonance imaging (MRI) \citep{Knoll:SPM:20, Hammernik:SPM:23}.
MRI is renowned for its ability to provide high-quality soft tissue images without the use of harmful ionizing radiation.
However, MRI scans are slow, which compromises patient comfort and throughput and can lead to motion artifacts.
In accelerated MRI, the scan time is reduced by a factor of $R$ by collecting only $1/R$ of the measurements required by the Nyquist sampling theorem.
Doing so, however, leads to an ill-posed imaging inverse problem, where it is impossible to guarantee recovery of the true image.
Thus, for robust diagnoses, uncertainty quantification becomes important.

\textbf{Data:}~
We follow the experimental setup of \citet{Wen:ECCV:24}, which uses the non-fat-suppressed subset of the multicoil fastMRI knee dataset from \citet{Zbontar:18}.
This subset contains $17\,286$ training images and $2188$ validation images. 
To generate the accelerated measurements, the spatial Fourier domain, known as the ``k-space'', is retrospectively subsampled with random Cartesian masks at acceleration rates $R \in \{16,8,4,2\}$.
The masks use Golden Ratio Offset (GRO) sampling \citep{Joshi:22} and include a fully sampled autocalibration signal (ACS) region in the center, and 
they are nested such that the measurements collected at each $R$ include all measurements collected at higher $R$.
See \citet{Wen:ECCV:24} for details.

\textbf{Models:}~
For the image-recovery model $f(\cdot)$, we use the popular E2E-VarNet from \citet{Sriram:MICCAI:20}, and
for the posterior-sampling method $g(\cdot,\cdot)$, we use the conditional normalizing flow (CNF) from \citet{Wen:ICML:23} with $c=32$ posterior samples. 
Both networks are trained (using the fastMRI training images) to handle acceleration rates $R\in\{16,8,4,2\}$ following the procedure in \citet{Wen:ECCV:24}.
The conformal predictors are all given access to the same tuning and calibration samples $\{(\hat{z}_i,z_i)\}$ and thus the image-recovery and posterior-sampling models are used identically across methods.

\textbf{Validation:}~
We first construct a tuning set $d\tune$ using $656$ of the $2188$ fastMRI validation samples (i.e., $30\%$), selected randomly.
Since the joint-coverage guarantee \eqref{joint_coverage} holds over the randomness in the calibration and test data, we evaluate performance using $T=10\,000$ Monte Carlo trials.
In each Monte Carlo trial $t \in \{1, \dots, T\}$, we randomly partition the remaining validation data into a calibration set $d\cal[t]$ of size $n=1073$ (or $50\%$) and a test set of size $n\test=459$ (or $20\%$) using indices $i \in I\test[t]$.
Because the IA and QN methods do not use a tuning set, we add the tuning samples to their calibration sets (now of size $n+n\tune=1729$) for fair comparison to the CPTS and minimax methods.

\subsubsection{Multi-metric blind FRIQ assessment} \label{sec:multi_metric_experiments}

We begin with the multi-metric blind FRIQ assessment problem from \secref{multi_metric}. 
For the FRIQ metrics, we consider PSNR, SSIM, learned perceptual image patch similarity (LPIPS) \citep{Zhang:CVPR:18}, and deep image structure and texture similarity (DISTS) \citep{Ding:TPAMI:20}.
The average FRIQ performance of the E2E-VarNet and CNF on the fastMRI validation set match those reported in \cite{Wen:TMLR:25}.

\textbf{Empirical coverage:}~
\Tabref{ejc} shows EJC versus desired joint coverage $1\!-\!\alpha$ at acceleration $R=8$.
While all methods satisfy the joint-coverage guarantee \eqref{joint_coverage}, the EJC of the IA method is overly conservative.

\begin{table}
    \centering
    \caption{Empirical joint coverage versus $1\!-\!\alpha$ for the multi-metric and multi-task MRI experiments at $R=8$.}
    \resizebox{0.8\columnwidth}{!}{%
    \begin{tabular}{cccccccc}
        \toprule
        & & \multicolumn{6}{c}{$1-\alpha$} \\
        \cmidrule(lr){3-8}
         Task & Method & 0.70 & 0.75 & 0.80 & 0.85 & 0.90 & 0.95 \\
         \midrule
         & IA \citep{Messoudi:COPA:20} & $0.7797$ & $0.8140$ & $0.8485$ & $0.8856$ & $0.9264$ & $0.9625$ \\

         & CPTS \citep{Sun:ICLR:24} & $0.7025$ & $0.7518$ & $0.8025$ & $0.8525$ & $0.9026$ & $0.9520$ \\
         
         multi-metric & QN \citep{Sampson:SADM:24} & $0.7006$ & $0.7502$ & $0.8006$ & $0.8503$ & $0.9005$ & $0.9503$ \\
         
         & CQR+Minimax (Ours) & $0.7008$ & $0.7511$ & $0.8005$ & $0.8506$ & $0.9002$ & $0.9506$\\

         & QN+Minimax (Ours) & $0.7010$ & $0.7502$ & $0.8007$ & $0.8508$ & $0.9002$ & $0.9508$ \\
         \cmidrule(lr){2-8}
         & IA \citep{Messoudi:COPA:20} & $0.7935$ & $0.8344$ & $0.8677$ & $0.8997$ & $0.9340$ & $0.9697$ \\

         & CPTS \citep{Sun:ICLR:24} & $0.7019$ & $0.7529$ & $0.8034$ & $0.8518$ & $0.9010$ & $0.9526$ \\
         
         multi-task & QN \citep{Sampson:SADM:24} & $0.7005$ & $0.7502$ & $0.8005$ & $0.8502$ & $0.9005$ & $0.9505$ \\
         
         & CQR+Minimax (Ours) & $0.7013$ & $0.7506$ & $0.8009$ & $0.8511$ & $0.9004$ & $0.9507$\\
         
         & QN+Minimax (Ours) & $0.7013$ & $0.7505$ & $0.8006$ & $0.8509$ & $0.9004$ & $0.9505$\\
         \bottomrule
    \end{tabular}%
    }
    \label{tab:ejc}
\end{table}

\textbf{Single-target coverage:}~
\Figref{multi_metric_coverage} plots $\max_k \ESC_k$ and $\min_k \ESC_k$ versus $1\!-\!\alpha$ at $R=8$.
The QN method suffers from very highly spread $\ESC_k$, reminiscent of the synthetic experiment with poor quantile estimates, while
the IA method suffers from overly conservative $\ESC_k$, reminiscent of the synthetic experiment with correlated noise. 
The CPTS method gives $\max_k\ESC_k$ values similar to IA (i.e., overly conservative) but slightly better for small values of $1\!-\!\alpha$, while 
the CPR-minimax method gives noticeably better $\max_k \ESC_k$ than CPTS, which is consistent with its minimax formulation.
(Recall that CPR-minimax and CPTS use the same nonconformity score.)
\Figref{multi_metric_radar} shows $\ESC_k$ individually for each target $k$, revealing that QN provides a massively conservative $\ESC_k\approx 1$ for PSNR and an overly small $\ESC_k$ for LPIPS.
Based on \secref{toy}, we conjecture that QN's coverage imbalance stems from unreliable quantile estimators $\hat{q}_{\frac{\alpha}{2},k}(\cdot)$ and $\hat{q}_{1-\frac{\alpha}{2},k}(\cdot)$. 
When the QN score is used within our minimax framework, however, the single-target coverages become well balanced.

\begin{figure}[t]
    \centering
    \begin{minipage}{0.29\linewidth}
        \centering
        \includegraphics[width=1\linewidth]{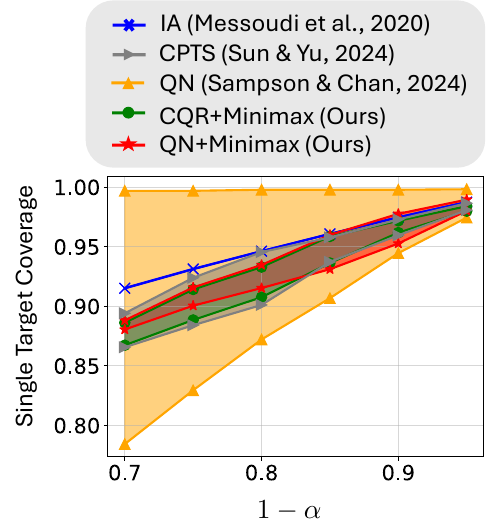}
        \caption{Min and max $\ESC_k$ versus $1\!-\!\alpha$ for multi-metric MRI at acceleration $R=8$.}
        \label{fig:multi_metric_coverage}
    \end{minipage}
    \hfill
    \begin{minipage}{0.68\linewidth}
        \centering
        \includegraphics[width=1\linewidth]{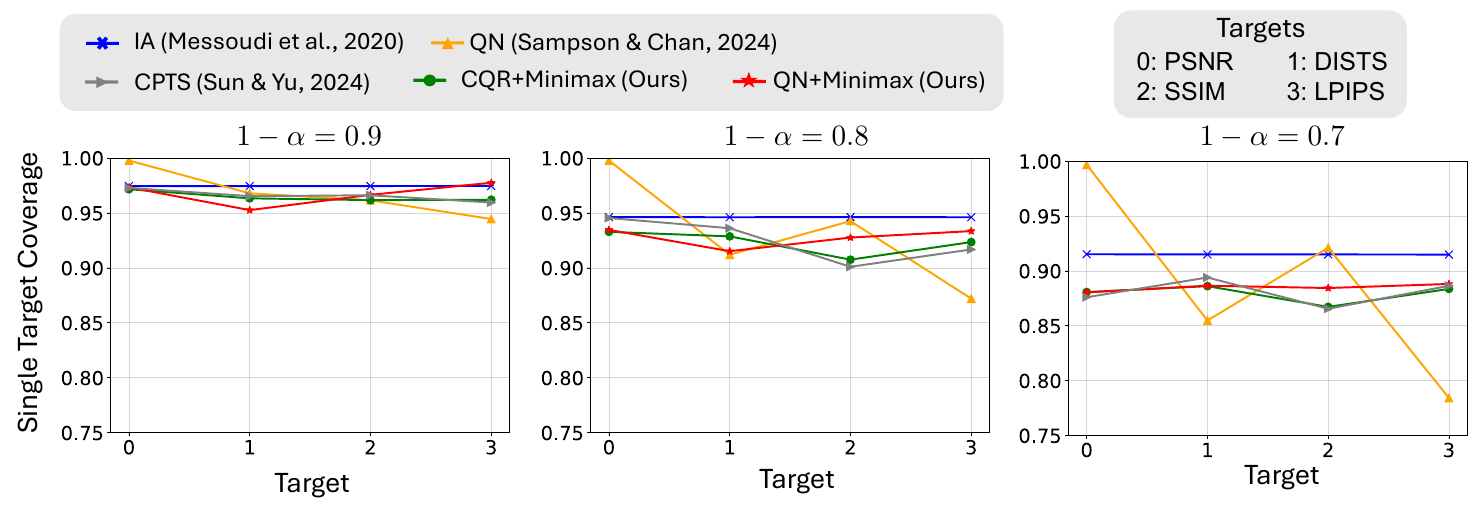}
        \caption{$\ESC_k$ for each FRIQ metric $k$ and desired coverage level $1\!-\!\alpha$ in MRI at acceleration $R=8$. }
        \label{fig:multi_metric_radar}
    \end{minipage}
\end{figure}

\textbf{Mean interval length:}~
\Figref{multi_metric_length} shows mean interval length $\MIL_k$ versus desired joint coverage $1\!-\!\alpha$ for each metric $k$ at $R=8$. 
For the PSNR metric, we see QN producing extremely loose prediction intervals at all $1\!-\!\alpha$, which is consistent with the overly generous single-target coverage shown in \figref{multi_metric_radar}.
For the LPIPS metric, although QN gives tighter prediction intervals than QN+Minimax, they come at the cost of very weak single-target coverage guarantees, as shown in \figref{multi_metric_radar}.
Although the IA method performs very consistently across the four target metrics, both CPTS and the proposed CQR+Minimax give tighter predictions intervals in every instance.
Although CQR+Minimax performs similarly to CPTS in most cases, it gives noticeably tighter prediction intervals for SSIM and LPIPS at high $1\!-\!\alpha$.

Overall, we see the minimax methods behaving as expected in this multi-metric MRI experiment, i.e., providing well-balanced single-target coverages and thereby avoiding overly large prediction intervals.

\begin{figure}
    \centering
    \includegraphics[width=1\linewidth]{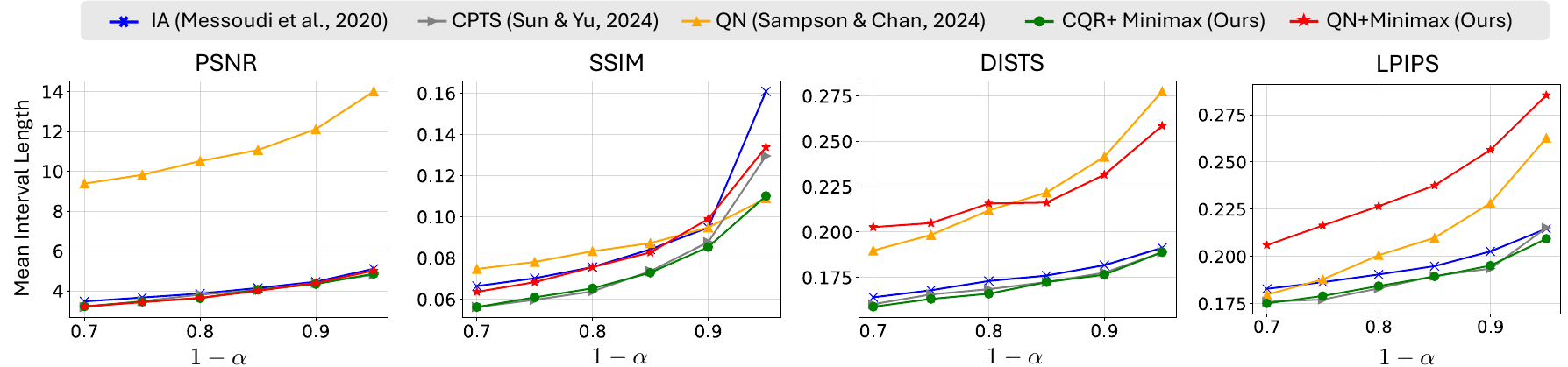}
    \caption{MIL versus $1\!-\!\alpha$ for the multi-metric MRI experiments at acceleration $R=8$. }
    \label{fig:multi_metric_length}
\end{figure}

\subsubsection{Multi-task uncertainty quantification} \label{sec:multi_task_experiments}

We now consider a multi-task UQ problem, where the goal is to ascertain the presence/absence of each of $K=5$ different pathologies in accelerated MRI with $R=8$.
We assume that a multi-label soft-output classifier $\mu(\cdot)$ has been trained on clean images $x_i$ to output a vector of probabilities $z_i\in[0,1]^K$.
At inference time, since we have access to only the accelerated measurements $y_0$ and not the true image $x_0$, the goal is to construct, for each pathology $k$, a prediction interval $\mathcal{C}_{\hat{\lambda}(d\cal),k}(\hat{z}_0)$ that contains the true soft-output $z_{0,k}=[\mu(x_0)]_k$ with some probabilistic guarantee.
We apply the IA, QN, CPTS, CQR+Minimax, and QN+Minimax methods as described in \secref{multi_task}.

For the classifier, we use a ResNet-50 \citep{He:CVPR:16} with $K=5$ outputs in the final linear layer.
To train it, we first initialize using ImageNet weights, then pretrain using SimCLR loss \citep{Chen:ICML:20} on the (unlabeled) fastMRI knee data, and lastly fine-tune using binary cross-entropy loss on the (labeled) fastMRI+ knee data \citep{Zhao:SD:22}.
The chosen pathologies, listed in \figref{multi_label_radar}, were the five with the most fastMRI+ samples.
Additional details can be found in \appref{classifier_details}.

\textbf{Empirical coverage:}~
\Tabref{ejc} reports EJC versus desired joint coverage $1\!-\!\alpha$.
There we see that all methods satisfy the joint-coverage guarantee \eqref{joint_coverage}, but that the EJC of IA is overly conservative.
In fact, the IA method is even more conservative in this multi-task experiment than it was in the multi-metric experiment, which (recalling \secref{toy}) may be due to increased correlation among the non-conformity scores for different tasks.

\textbf{Single-target coverage:}~
\Figref{multi_label_coverage} plots the $\max_k \ESC_k$ and $\min_k \ESC_k$ versus $1\!-\!\alpha$ for each method at $R=8$.
Again, the IA method gives overly conservative $\ESC_k$ for all $1\!-\!\alpha$.
The CPTS method suffers from imbalanced $\ESC_k$ across targets, and consequently its $\max_k \ESC_k$ is similar to IA's in most cases.
By contrast, the CQR+Minimax method yields well-balanced $\ESC_k$ over the entire range.
As for the methods that use the QN nonconformity score, we see that QN and QN+Minimax give similar $\ESC_k$ except around $1\!-\!\alpha=0.8$, where QN+Minimax is better balanced. 
\Figref{multi_label_radar} shows $\ESC_k$ individually for each target $k$ and shows that each method distributes coverage across targets in a unique manner, especially at $1\!-\!\alpha=0.7$.

\begin{figure}[t]
    \centering
    \begin{minipage}{0.29\linewidth}
        \centering
        \includegraphics[width=1\linewidth]{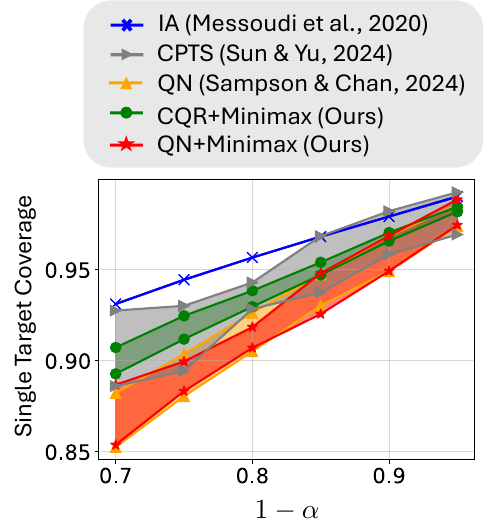}
        \caption{ Min and max $\ESC_k$ versus $1\!-\!\alpha$ for multi-task MRI at acceleration $R=8$.
        }
        \label{fig:multi_label_coverage}
    \end{minipage}
    \hfill
    \begin{minipage}{0.68\linewidth}
        \centering
        \includegraphics[width=1\linewidth]{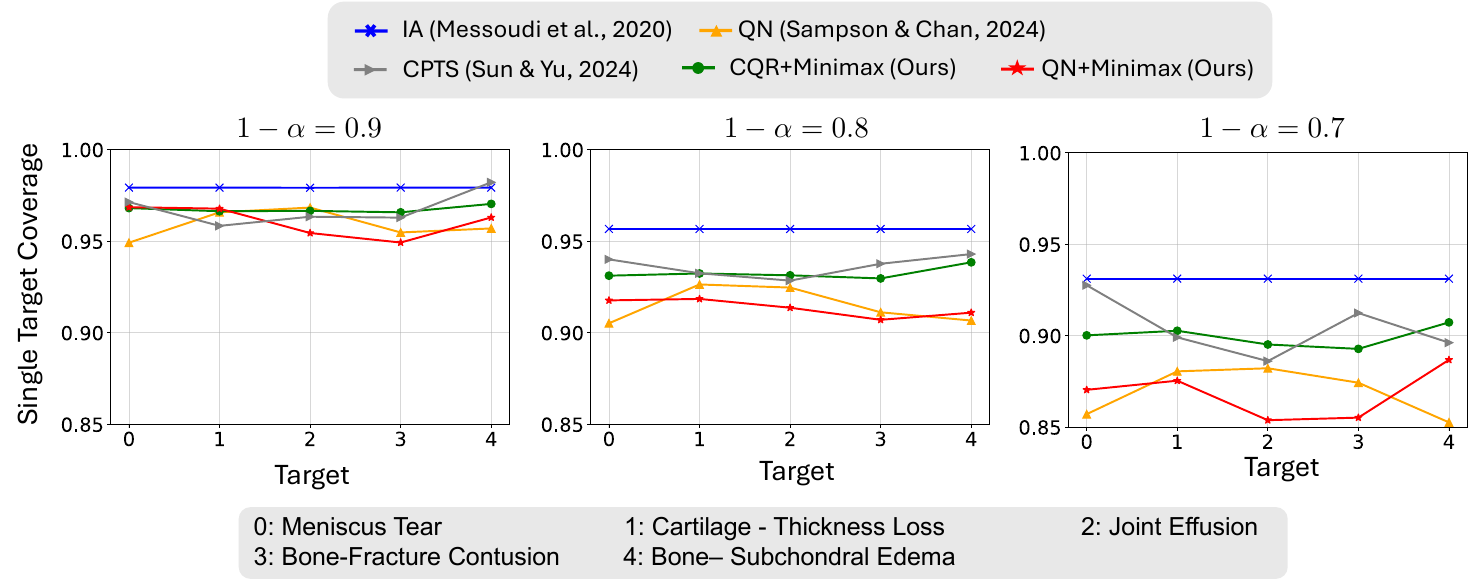}
        \caption{$\ESC_k$ versus target $k$ for several values of $1\!-\!\alpha$ in multi-task MRI at acceleration $R=8$.}
        \label{fig:multi_label_radar}
    \end{minipage}
\end{figure}

\begin{figure}[t]
    \centering
    \includegraphics[width=1\linewidth]{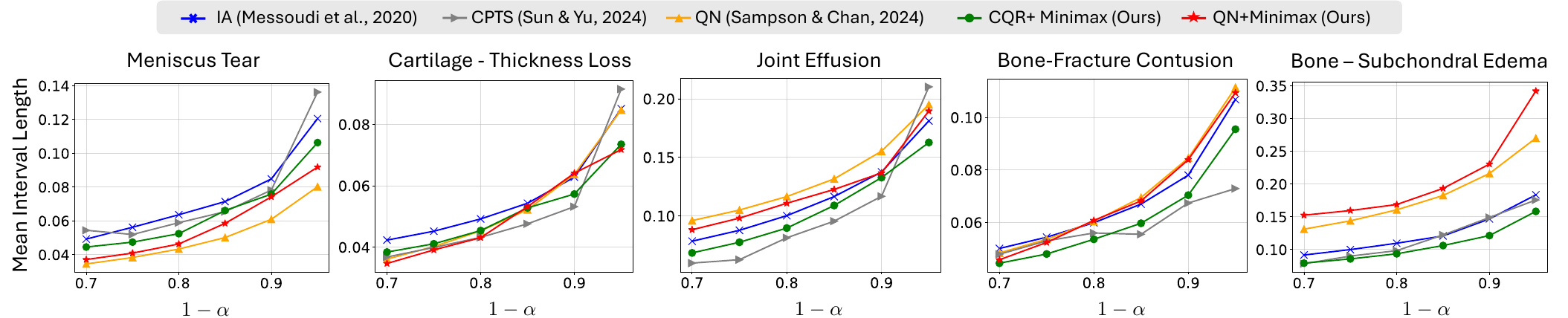}
    \caption{$\MIL_k$ versus $1\!-\!\alpha$ for each task $k$ in multi-task MRI at acceleration $R=8$. }
    \label{fig:multi_label_length}
\end{figure}

\textbf{Mean interval length:}~
\Figref{multi_label_length} plots $\MIL_k$ versus desired joint coverage $1\!-\!\alpha$ for each 
target $k$.
Among the CQR-based methods (i.e., IA, CPTS, CQR+Minimax), we see that IA tends to give the loosest prediction intervals, while CPTS and CQR+Minimax trade for the tightest interval on a case-by-case basis.
For large $1\!-\!\alpha$, however, CPTS gives unnecessarily loose intervals for 3 of the 5 classes, while CQR+Minimax avoids this unwanted behavior.
Relative to the CQR-based methods, the QN-based ones are much less consistent, in that they give very tight intervals for some classes (e.g., Meniscus Tear) but very loose ones for others (e.g., Bone--Subchondral Edema).
Overall, QN and QN+Minimax achieve similar performance, with one or the other winning on a case-by-case basis.
The loose interval produced by QN+Minimax at $1\!-\!\alpha=0.95$ is most likely due to the use of relatively few tuning samples (recall \figref{toy_problem}(d)), recalling that the proposed methods are minimax only asymptotically.

\subsubsection{Multi-round measurement acquisition with FRIQ guarantees} \label{sec:multi_round_experiments1}

We now consider applying the multi-round measurement protocol from \secref{multi_round} to accelerate MRI while providing a probabilistic FRIQ guarantee.
We adopt the experimental setup of \citet{Wen:TMLR:25}, where measurements are collected over $B=5$ rounds at acceleration rates $R\in\{16,8,4,2,1\}$ but stop as soon as a conformal upper bound on DISTS\footnote{A recent clinical MRI study \citep{Kastryulin:IA:23} evaluated 35 FRIQ metrics and found that DISTS correlated best with radiologists' ratings of perceived noise level, contrast level, and artifacts when comparing reconstructions to ground-truth images.} falls below a threshold of\footnote{\citet{Wen:TMLR:25} chose $\tau=0.16$ because this is the DISTS threshold at which a single-round measurement scheme at acceleration $R=2$ achieves $1\!-\!\alpha=0.95$ coverage.} $\tau=0.16$.
To adapt the proposed multi-target method from \secref{multi_round} to this upper-bounding setup, we run the IA, CPTS, and CQR+Minimax methods with the one-sided CQR nonconformity score $s_k(\hat{z}_{i,k}, z_{i,k}) = z_{i,k} - \hat{q}_{1-\frac{\alpha}{2}}(\hat{z}_{i,k})$ and we run QN and QN+Minimax with one-sided QN nonconformity score
\begin{align}
s_k(\hat{z}_{i,k}, z_{i,k}) =  \big(z_{i,k} - \hat{q}_{1-\frac{\alpha}{2},k}(\hat{z}_i) \big) \frac{\hat{q}_{1-\frac{\alpha}{2},1}(\hat{z}_i) - \hat{q}_{\frac{\alpha}{2},1}(\hat{z}_i)}{\hat{q}_{1-\frac{\alpha}{2},k}(\hat{z}_i) - \hat{q}_{\frac{\alpha}{2},k}(\hat{z}_i)} , 
\end{align}
as motivated by \eqref{quantile_normalization}.

\Figref{multi_accel}(a) plots the empirical accepted coverage 
\begin{align}
\EAC
\defn \frac{1}{T}\sum_{t=1}^T \frac{1}{|\mc{I}\test[t]|} \sum_{i \in \mc{I}\test[t]} \mathds{1}\{z_{i} \in \mc{C}^{[b_i]}(\hat{z}_{i}^{[b_i]})\}
\label{eq:accepted_ECt} ,
\end{align}
where $b_i$ denotes the accepted round for the $i$th sample,
versus the desired accepted coverage of $1\!-\!\alpha$ for the IA, QN, CPTS, CQR+Minimax, and QN+Minimax versions of the multi-target method proposed in \secref{multi_round}, as well as the separate-calibration (SC) method from \citet{Wen:ECCV:24} discussed in \secref{multi_round}.
The figure shows that the measurements accepted by the SC method do not provide the desired coverage, while those accepted by the multi-target methods do, validating the goal of \secref{multi_round}.
\Figref{multi_accel}(b) plots the average accepted acceleration-rate 
\begin{align}
R\avg
\defn \Big( \frac{1}{T}\sum_{t=1}^T \frac{1}{|\mc{I}\test[t]|} \sum_{i \in \mc{I}\test[t]} \frac{1}{R_{b_i}} \Big)^{-1}
\end{align}
versus $1\!-\!\alpha$, where $R_{b_i}$ is the acceleration rate at the accepted round for test sample $i$.
Although the SC method achieves higher $R\avg$ than the multi-target methods, it comes at the cost of not providing a coverage guarantee on accepted samples.
Among the multi-target methods, the CPTS and CQR+Minimax methods yield the highest $R\avg$ in all cases, demonstrating the advantage of tight conformal bounds.
However, the much lower computational complexity of CQR+Minimax makes it advantageous in practice.

\begin{figure}[t]
    \centering
    \begin{minipage}{0.49\linewidth}
        \centering
        \includegraphics[width=\linewidth,trim=0 25 0 0,clip]{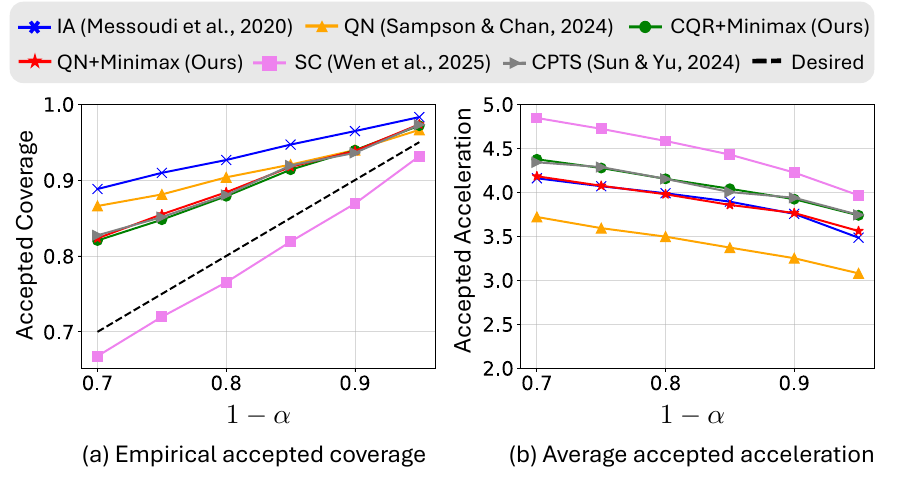}\\[-2mm]
        \textsf{\scriptsize \qquad \hfill (a) \hfill~~\hfill (b) \hfill \mbox{}}
        \caption{For multi-round measurements that stop as soon as the DISTS upper bound falls below $\tau=0.16$, (a) plots EAC and (b) plots $R\avg$ versus the desired accepted coverage $1\!-\!\alpha$.}
        \label{fig:multi_accel}
    \end{minipage}
    \hfill
    \begin{minipage}{0.49\linewidth}
        \centering
        \includegraphics[width=\linewidth,trim=0 25 0 0,clip]{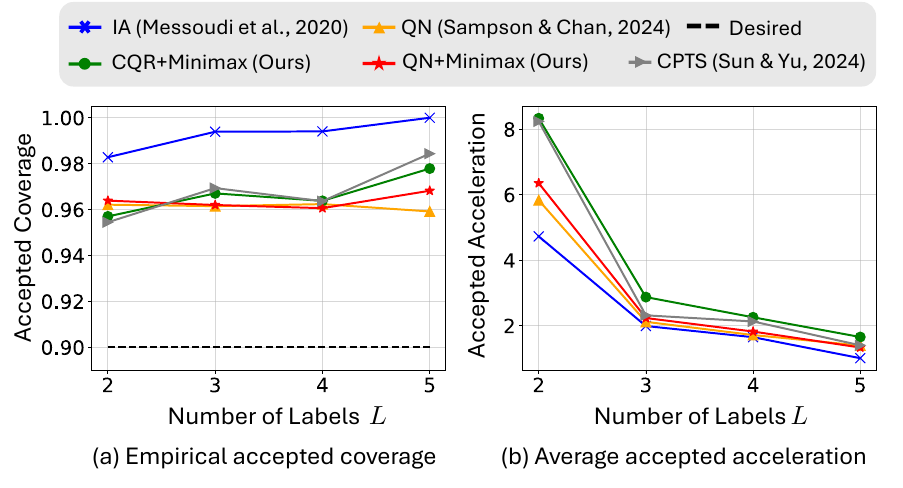}\\[-2mm]
        \textsf{\scriptsize \qquad \hfill (a) \hfill~~\hfill (b) \hfill \mbox{}}
        \caption{For multi-round measurements that stop as soon as the prediction intervals for all pathology labels fall below $\tau=0.1$, (a) plots EAC and (b) plots $R\avg$ versus the number of labels $L$ at $1\!-\!\alpha=0.9$.}
        \label{fig:multi_label_multi_round}
    \end{minipage}
\end{figure}

\subsubsection{Multi-round measurement acquisition with downstream classification guarantees} \label{sec:multi_round_experiments2}

Finally, we consider an application of the multi-round measurement protocol from \secref{multi_round} that aims to accelerate MRI while providing a probabilistic guarantee on downstream classification of multiple pathologies.
For this we combine 
the multi-round setup from \secref{multi_round_experiments1} with
the multi-label setup from \secref{multi_task_experiments}.
In particular, we take measurements over $B=5$ rounds at acceleration rates $R\in\{16,8,4,2,1\}$ but stop as soon as the prediction interval lengths for all $L$ pathology labels fall below\footnote{We chose $\tau=0.1$ purely for the sake of demonstration. A practically meaningful $\tau$ could be attained with expert guidance through clinical trials.} $\tau=0.1$.
Rather than exclusively considering $L=5$ pathology labels, we experiment with $L\in\{2,3,4,5\}$ to investigate the effect of $L$.
Here, $L=l$ corresponds to the $l$ most prevalent classes in the fastMRI+ training data. 

For a desired accepted coverage of $1\!-\!\alpha=0.9$, \figref{multi_label_multi_round}(a) plots EAC versus the number of labels $L$.
There we see that all methods meet the desired coverage for all $L$, which shows that the multi-round methodology proposed in \secref{multi_round} is flexible with regards to the choice of conformal predictor.
\Figref{multi_label_multi_round}(b) plots $R\avg$ versus the number of labels $L$ for different conformal predictors.
This figure shows that the proposed CQR+Minimax method yields the highest $R\avg$ across all tested values of $L$.
It also shows that $R\avg$ decreases with $L$, which is expected because the stopping criterion becomes more strict with larger $L$.

\section{Conclusion}
Motivated by the need for multi-target uncertainty quantification in imaging inverse problems, we propose a minimax-based approach to multi-target conformal prediction.
The proposed method aims to minimize the maximum single-target coverage, $\max_k \Pr\{Z_{0,k}\in\mathcal{C}_{\hat{\lambda}(D\cal),k}(\hat{Z}_{0})\}$, across targets $k$ subject to a marginal joint-coverage guarantee of the form $\Pr\{\cap_{k=1}^K Z_{0,k}\in\mathcal{C}_{\hat{\lambda}(D\cal),k}(\hat{Z}_{0})\} \geq 1\!-\!\alpha$, where $\alpha$ is user-specified. 
Because our approach is an instance of split conformal prediction, it guarantees marginal joint-coverage with finite-sized tuning and calibration datasets under the usual test/calibration exchangeability condition.
Furthermore, it
converges to the minimax solution as the size of the tuning and calibration sets grow to infinity.
In addition to our minimax multi-target conformal predictor, we propose a multi-round measurement acquisition scheme that guarantees marginal coverage of the final-round prediction interval.
We numerically compared the proposed minimax multi-target predictor to several existing methods on a synthetic-data problem as well as four accelerated-MRI problems and found that the proposed minimax method gives better balanced single-target coverages while guaranteeing joint marginal coverage.
In addition, we numerically investigated the proposed multi-round measurement scheme and confirmed that it provides marginal accepted coverage when used with a variety of conformal predictors.

\subsubsection*{Limitations}
There are several limitations to this work.
First, like with many conformal prediction methods, the joint-coverage guarantee \eqref{joint_coverage} holds only for prediction/target pairs $(\hat{Z}_i, Z_i) $ that are statistically exchangeable over the test and calibration data.
Furthermore, to prove that our approach is asymptotically minimax, we assumed that the nonconformity scores $\{S_{i,k}\}_{i=1}^{n+n\tune}$ are i.i.d.
Further work is needed to generalize these restrictions, and the works \cite{Tibshirani:NIPS:19}, \cite{Barber:AS:23}, \cite{Cauchois:JASA:24} suggest modifications that address non-exchangeability.
In addition, the proposed applications to MRI are preliminary, in that rigorous clinical trials are needed before they are adopted in practice.

\subsubsection*{Broader impact statement}

We expect that our methodology will positively impact the field of imaging inverse problems by providing prediction intervals on multiple estimation targets that involve the (unknown) true image.
These intervals inform the practitioner of how much uncertainty the measurement-and-reconstruction process introduces to downstream tasks, and whether the collected measurements are sufficient for a given reconstruction method.
Furthermore, the proposed multi-round acquisition protocol allows one to collect fewer measurements while still providing guarantees on estimation performance.
However, clinicians must be careful when interpreting the results, understanding, for example, that our coverage guarantees are marginal and not conditional.
As such, they hold only when averaged over many different test samples and calibration sets, rather than for a specific test sample and/or calibration set.  
Furthermore, they hold only when the test sample is statistically exchangeable with the calibration samples.

\section*{Acknowledgments}

This work was supported in part by the National Institutes of Health under Grant R01-EB029957.

\clearpage
\bibliography{bibs/macros,bibs/machine,bibs/misc,bibs/mri,bibs/sparse,bibs/books,bibs/phase}
\bibliographystyle{tmlr}
\clearpage

\appendix

\section{Proof of \thmref{minimax_convergence}} \label{app:minimax_proof}

In this section, we show that $\hat{\lambda}(d\cal)$ in \eqref{lambda_hat2} converges to $\hat{\lambda}$ in \eqref{minimax_single_parameter} as $n\rightarrow \infty$ and $n\tune \rightarrow \infty$.
We first recall the definition of almost-sure convergence.

\begin{definition}[Almost-sure convergence]
Let $(X_n)_{n \geq 1}$ be a sequence of random variables defined on a probability space $(\Omega, \mathcal{F}, P)$. 
We say that $X_n$ \textbf{converges almost surely} (or with probability 1) to a random variable $X$, denoted as $X_n \xrightarrow{\text{a.s.}} X$, if
\[
\Pr\Big\{ \lim_{n \to \infty} X_n = X \Big\} = 1.
\]
That is, the outcomes $\omega \in \Omega$ under which $X_n(\omega)$ converges to $X(\omega)$ occur with probability one.
\end{definition}

We now state two theorems that form the basis for our convergence analysis.

\begin{theorem}[Glivenko-Cantelli \citep{Fristedt:Book:13}] \label{thm:glivenko-cantelli}
    Suppose $X_1, \dots, X_n$ are i.i.d random variables with CDF $F(\cdot)$. Define the empirical CDF as
    \[ 
    \hat{F}(x) \defn \frac{|\{ i: X_i \leq x,~ i=1,\dots,n \}|}{n}.
    \]
    Then $\hat{F}(\cdot)$ converges uniformly to $F(\cdot)$ almost surely, i.e.
    \[
    \sup_{x\in\mathbb{R}} |\hat{F}(x) - F(x)| \xrightarrow{a.s.} 0 .
    \]
\end{theorem}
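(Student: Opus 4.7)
The plan is to upgrade pointwise almost-sure convergence of the empirical CDF (which is immediate from the strong law of large numbers) to uniform almost-sure convergence by a bracketing argument that exploits the monotonicity of both $F$ and $\hat{F}$. First I would observe that for any fixed $x\in\Real$, the indicators $\mathbf{1}\{X_i\leq x\}$ are i.i.d.\ Bernoulli random variables with mean $F(x)$, so the strong law of large numbers yields $\hat{F}(x)\xrightarrow{a.s.}F(x)$. An analogous argument applied to the indicators $\mathbf{1}\{X_i<x\}$ gives $\hat{F}(x^-)\xrightarrow{a.s.} F(x^-)$, where $F(x^-)=\lim_{y\uparrow x}F(y)$.

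Next, to pass from pointwise to uniform convergence, I would fix an integer $m\geq 1$ and choose a grid $-\infty=x_0<x_1<\cdots<x_m=+\infty$ such that $F(x_k^-)-F(x_{k-1})\leq 1/m$ for every $k$; such a grid exists because $F$ is non-decreasing with limits $0$ and $1$ at $\mp\infty$, and the jumps of $F$ can be placed at grid points (so the $1/m$ bound refers only to the continuous part of the increment). For any $x\in[x_{k-1},x_k)$, monotonicity gives the sandwich
\begin{align}
\hat{F}(x_{k-1})-F(x_k^-) \;\leq\; \hat{F}(x)-F(x) \;\leq\; \hat{F}(x_k^-)-F(x_{k-1}),
\end{align}
and inserting $\pm F(x_{k-1})$ and $\pm F(x_k^-)$ respectively yields
\begin{align}
|\hat{F}(x)-F(x)| \;\leq\; \max_{1\leq k\leq m}\max\bigl\{|\hat{F}(x_k^-)-F(x_k^-)|,\;|\hat{F}(x_{k-1})-F(x_{k-1})|\bigr\} + \tfrac{1}{m}.
\end{align}
Taking the supremum over $x$ then reduces the problem to controlling the empirical CDF at finitely many points.

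The final step is to let $n\to\infty$ followed by $m\to\infty$. For each fixed $m$, the intersection of the (finitely many) probability-one events on which $\hat{F}(x_k)\to F(x_k)$ and $\hat{F}(x_k^-)\to F(x_k^-)$ for $k=0,\dots,m$ still has probability one, so almost surely $\limsup_{n\to\infty}\sup_x|\hat{F}(x)-F(x)|\leq 1/m$. Intersecting this probability-one event over $m\in\Nat$ (a countable intersection, hence still probability one) gives $\limsup_n\sup_x|\hat{F}(x)-F(x)|=0$ almost surely, which is exactly the claim.

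\textbf{Expected obstacle.} The only subtlety I anticipate is the careful treatment of atoms of $F$: a naive discretization that only controls $F(x_k)-F(x_{k-1})$ fails at jump points because the oscillation on $[x_{k-1},x_k)$ can exceed $1/m$. Using left-limits $F(x_k^-)$ and $\hat{F}(x_k^-)$ in the sandwich, and making sure the grid isolates each large jump of $F$ individually, resolves this and is the step that requires the most care.
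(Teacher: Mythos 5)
The paper does not prove this statement at all: it is imported as a known classical result, cited to Fristedt's textbook, and used as a black box inside the proof of Theorem 2 (and hence Theorem 1). So there is no in-paper argument to compare against; the only question is whether your proof is sound, and it is. You give the standard bracketing argument: pointwise almost-sure convergence at each grid point (and at left limits) via the strong law of large numbers, a quantile-type grid with $F(x_k^-)-F(x_{k-1})\leq 1/m$, the monotonicity sandwich reducing the supremum to a finite maximum plus $1/m$, and a countable intersection of probability-one events over $m$. Each step checks out, including the two directions of the sandwich and the insertion of $\pm F(x_{k-1})$, $\pm F(x_k^-)$. Your flagged obstacle — handling atoms of $F$ by working with left limits $F(x_k^-)$ and $\hat{F}(x_k^-)$ rather than only the grid values $F(x_k)$ — is indeed the one place a naive discretization fails, and you resolve it correctly. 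The only residual technicality is that when $F$ has a jump exceeding $1/m$ the quantile grid points can coincide, so one should either merge duplicates or note that the corresponding intervals $[x_{k-1},x_k)$ are empty; this does not affect the argument.
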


\begin{theorem} \label{thm:emp_quant_to_true_quant}
    Let \( X_1, X_2, \dots, X_n \) be i.i.d random variables with CDF $F(\cdot)$.
    Define the quantile at level $p\in(0,1)$ as 
    \[
    Q(p) = \inf \{x : F(x) \geq p \}
    \]
    and the empirical quantile at level $p\in(0,1)$ as 
    \[
    Q_n(p) = \inf \{x : \hat{F}(x) \geq p \},
    \]
    where $\hat{F}(\cdot)$ is the empirical CDF.
    For a fixed level $p$, construct the $n$-dependent level 
    \[
    \gamma_n = \frac{\lceil p (n+1) \rceil}{n} ,
    \]
    which approaches $p$ as $n \rightarrow \infty$.
    If $F(\cdot)$ is continuous and strictly increasing at $Q(p)$, then 
    \[
    Q_n(\gamma_n) \xrightarrow{a.s} Q(p).
    \]
    That is, the empirical quantile at level $\gamma_n$ converges almost surely to the true quantile at level $p$.
\end{theorem}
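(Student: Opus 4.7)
The plan is to combine the Glivenko-Cantelli theorem with the local continuity and strict monotonicity of $F$ at $Q(p)$ to sandwich $Q_n(\gamma_n)$ in an arbitrarily small neighborhood of $Q(p)$, almost surely for all sufficiently large $n$.

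First, I would fix $\epsilon > 0$. Continuity of $F$ at $Q(p)$, together with the definition $Q(p) = \inf\{x : F(x)\geq p\}$, forces $F(Q(p))=p$, and strict monotonicity in a neighborhood of $Q(p)$ then gives $F(Q(p)-\epsilon) < p < F(Q(p)+\epsilon)$ for all sufficiently small $\epsilon$. Hence the slack
\[
\eta \defn \min\{\,p - F(Q(p)-\epsilon),\; F(Q(p)+\epsilon) - p\,\}
\]
is strictly positive, and it is this $\eta$ that must absorb the two approximation errors: the uniform deviation $\sup_x|\hat{F}(x)-F(x)|$ and the level deviation $|\gamma_n - p|$.

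Next, I would apply \thmref{glivenko-cantelli} to obtain an almost-sure event on which $\sup_x|\hat{F}(x)-F(x)|\to 0$. On this event, combined with the deterministic convergence $\gamma_n\to p$, there exists $N$ such that for all $n\geq N$, both $\sup_x|\hat{F}(x)-F(x)| < \eta/3$ and $|\gamma_n - p| < \eta/3$. Evaluating at $Q(p)+\epsilon$ gives $\hat{F}(Q(p)+\epsilon) \geq F(Q(p)+\epsilon) - \eta/3 \geq p + \eta - \eta/3 > \gamma_n$, so $Q(p)+\epsilon$ belongs to the infimum set defining $Q_n(\gamma_n)$, yielding $Q_n(\gamma_n) \leq Q(p)+\epsilon$. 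Symmetrically, $\hat{F}(Q(p)-\epsilon) \leq F(Q(p)-\epsilon) + \eta/3 \leq p - \eta + \eta/3 < \gamma_n$, and because $\hat{F}$ is non-decreasing, every $x \leq Q(p)-\epsilon$ lies outside the infimum set, giving $Q_n(\gamma_n) > Q(p)-\epsilon$.

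Since $\epsilon > 0$ was arbitrary, $|Q_n(\gamma_n) - Q(p)| \leq \epsilon$ holds almost surely for all $n$ sufficiently large, which establishes $Q_n(\gamma_n) \xrightarrow{\text{a.s.}} Q(p)$. The main subtlety I anticipate is handling the moving level $\gamma_n$ simultaneously with the random empirical CDF; the textbook empirical-quantile consistency result is stated at a fixed level $p$, so a careful budget split of $\eta$ across the two error sources is needed. Strict monotonicity of $F$ at $Q(p)$ is precisely what keeps that budget positive: without it, $F(Q(p)\pm\epsilon)$ could coincide with $p$ and the sandwich would collapse.
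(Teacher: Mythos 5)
Your proof is correct and follows essentially the same approach as the paper's: fix $\epsilon>0$, invoke Glivenko--Cantelli to uniformly control $\sup_x|\hat F(x)-F(x)|$, use continuity and strict monotonicity at $Q(p)$ to keep the slack $\eta$ positive, and absorb both the uniform CDF deviation and the level shift $\gamma_n-p$ into that slack to trap $Q_n(\gamma_n)$ in $[Q(p)-\epsilon,\,Q(p)+\epsilon]$. The paper packages the two bounds as sub-lemmas proved by contradiction and splits the budget as $\delta$ and $2/n\leq\delta/2$, whereas you use a direct sandwich with a uniform $\eta/3$ split, but these are cosmetic variants of the same argument.
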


\begin{proof}
    First, we analyze the convergence of $\gamma_n$.
    Observe that
    \begin{align}
    \gamma_n = \frac{\lceil p (n+1) \rceil}{n} = \frac{p(n+1)+\Delta_n}{n} = p+\frac{p+\Delta_n}{n} 
    \end{align}
    where $\Delta_n \in [0,1)$ accounts for the rounding of the ceiling function.
    Thus 
    \begin{align}
    p < \gamma_n < p+ \frac{2}{n}
    \label{eq:gamma_n} ,
    \end{align}
    and $\lim_{n\rightarrow \infty} \gamma_n = p$.  
    Next, we look to bound $Q_n(\gamma_n)$ as $n\rightarrow \infty$. 
    For any fixed $\epsilon > 0$, and assuming $F(\cdot)$ is continuous and strictly increasing at $Q(p)$, we have
    \[
    F(Q(p) - \epsilon) <  p < F(Q(p) + \epsilon). 
    \]
    And by \thmref{glivenko-cantelli}, the Glivenko-Cantelli theorem, $\hat{F}(\cdot)$ converges uniformly to $F(\cdot)$ almost surely. 
    This means that, for any $\delta>0$,
    there almost surely exists an $N$ such that, for all $n\geq N$ and for all $x\in\Real$, 
    \[
    |\hat{F}(x) - F(x)| \leq \delta .
    \]
    By choosing 
    \[
    \delta < \min\{p-F(Q(p)-\epsilon),F(Q(p)+\epsilon)-p \}
    \]
    we get 
    \begin{align}
    F(Q(p)-\epsilon) + \delta < p < F(Q(p)+\epsilon) - \delta
    \label{eq:F_delta_bounds} ,
    \end{align}
    and so 
    \begin{align}
    \hat{F}(Q(p)-\epsilon) < p < \hat{F}(Q(p)+\epsilon)
    \label{eq:Fn_bounds}
    \end{align}
    for all $n\geq N$. 
    We now establish two intermediate results.

\begin{lemma}\label{lem:lower}
    For sufficiently large $n$, we have $Q_n(\gamma_n) \geq Q(p)-\epsilon$.
\end{lemma}
\begin{subproof}
    We prove the claim using contradiction.
    Suppose that $Q_n(\gamma_n) < Q(p)-\epsilon$.
    Then, due to the non-decreasing property of $\hat{F}(\cdot)$, we have
    \begin{align}
        \hat{F}(Q_n(\gamma_n)) \leq \hat{F}(Q(p)-\epsilon)
    \end{align}
    for any $n$.
    Furthermore, since $\hat{F}(Q_n(\gamma_n))\geq \gamma_n$ for any $n$ by the definition of the empirical quantile, and since $\gamma_n>p$ from \eqref{gamma_n}, we have
    \begin{align}
        \hat{F}(Q(p)-\epsilon) \geq \gamma_n > p
        \label{eq:lower_bound_contradiction} .
    \end{align}
    However, \eqref{lower_bound_contradiction} contradicts \eqref{Fn_bounds} when $n\geq N$.
    This implies that $Q_n(\gamma_n) \geq Q(p)-\epsilon$ for sufficiently large $n$.
\end{subproof}

\begin{lemma}\label{lem:upper}
    For sufficiently large $n$, we have $Q_n(\gamma_n) \leq Q(p)+\epsilon$.
\end{lemma}
\begin{subproof}
    We prove the claim using contradiction.
    Suppose that $Q_n(\gamma_n) > Q(p)+\epsilon$.
    Recall that, by definition, $Q_n(\gamma_n)=\inf\{x: \hat{F}(x)\geq \gamma_n\}$.
    Thus if $Q(p)+\epsilon < Q_n(\gamma_n)$ then 
    \begin{align}
        \hat{F}(Q(p)+\epsilon) < \gamma_n
        \label{eq:upper_bound_contradiction} .
    \end{align}

    And recall from \eqref{F_delta_bounds} that $F(Q(p)+\epsilon)-\delta > p$,
    or equivalently that 
    \begin{align}
    F(Q(p)+\epsilon)-\frac{\delta}{2} > p + \frac{\delta}{2}
    \label{eq:F_ineq}.
    \end{align}
    From \thmref{glivenko-cantelli}, the Glivenko-Cantelli theorem, $\hat{F}(\cdot)$ converges uniformly to $F(\cdot)$ almost surely.  
    This means that, for the given $\delta$, there almost surely exists an $N'$ such that, for all $n\geq N'$ and any $x\in\Real$,
    \begin{align}
    |\hat{F}(x)-F(x)| \leq \frac{\delta}{2} 
    \quad \Rightarrow \quad
    \hat{F}(x) \geq F(x) - \frac{\delta}{2} 
    \label{eq:Fn_ineq}.
    \end{align}
    Combining \eqref{F_ineq} and \eqref{Fn_ineq}, we have that, for all $n\geq N'$,
    \[
    \hat{F}(Q(p)+\epsilon) \geq F(Q(p)+\epsilon)-\frac{\delta}{2} > p + \frac{\delta}{2} .
    \]
    From \eqref{gamma_n}, we see that, for $n\geq 4/\delta$, 
    \[
    \gamma_n < p + \frac{2}{n} \leq p +\frac{\delta}{2} .
    \]
    Thus, for sufficiently large $n$, we have
    \[
    \hat{F}(Q(p)+\epsilon) > \gamma_n,
    \]
    which contradicts \eqref{upper_bound_contradiction}.
    This implies that $Q_n(\gamma_n) \leq Q(p)+\epsilon$ for large $n$.
\end{subproof}

    \Lemref{lower} and \lemref{upper} hold almost surely for an arbitrary $\epsilon>0$,
    and together say that
    \[
    Q(p)-\epsilon \leq Q_n(\gamma_n) \leq Q(p) + \epsilon 
    \]
    for sufficiently large $n$.
    Since we can make $\epsilon$ arbitrarily small, we have that
    \[
    \lim_{n \rightarrow \infty} Q_n(\gamma_n) = Q(p),
    \]
    almost surely, and thus $Q_n(\gamma_n) \xrightarrow{a.s.} Q(p)$.
\end{proof}

Having established \thmref{glivenko-cantelli} and \thmref{emp_quant_to_true_quant}, we now return to our main objective, which is proving that the $\hat{\lambda}(d\cal)$ in \eqref{lambda_hat2} converges to the $\hat{\lambda}$ in \eqref{minimax_single_parameter}.
For clarity, we restate \thmref{minimax_convergence} here.

\begin{reptheorem}[Restatement of \thmref{minimax_convergence}] 
    For each target component $k=1,\dots,K$, suppose that the nonconformity scores 
    $\{S_{i,k}\}_{i=1}^{n+n\tune}$
    are i.i.d with CDF $F_{S_k}(\cdot)$, 
    and for $T\defn \max_kF_{S_k}(S_k)$, suppose that $F_{T}(\cdot)$ is continuous and strictly increasing at the $(1\!-\!\alpha)$-level quantile of $T$.
    Then $\hat{\lambda}(d\cal)$ from \eqref{lambda_hat2} converges to $\hat{\lambda}$ from \eqref{minimax_single_parameter} almost surely as $n \rightarrow \infty$ and $n\tune \rightarrow \infty$.
    
\end{reptheorem}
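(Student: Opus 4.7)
The approach is to interpret both $\hat{\lambda}$ and $\hat{\lambda}(d\cal)$ as quantiles of a common random variable $T \defn \max_k F_{S_k}(S_k)$, so that the claimed convergence reduces to showing that the relevant empirical quantile converges to the true quantile. Observe that \eqref{minimax_single_parameter} identifies $\hat{\lambda}$ as $\inf\{\lambda : F_T(\lambda) \geq 1\!-\!\alpha\}$, i.e.\ the $(1\!-\!\alpha)$-level quantile of $T$. Meanwhile, $\hat{\lambda}(d\cal)$ from \eqref{lambda_hat2} is the empirical $\gamma_n$-quantile, with $\gamma_n = \lceil(1\!-\!\alpha)(n\!+\!1)\rceil/n$, of the transformed scores $\bar{s}_i = \max_k \hat{F}_{S_k}(S_{i,k})$, which use the tuning-based empirical CDFs rather than the true $F_{S_k}$. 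If the $\bar{s}_i$ were exact i.i.d.\ copies of $T$, \thmref{emp_quant_to_true_quant} would finish the job; the work is in handling the tuning-induced perturbation alongside the $n\to\infty$ limit.

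The plan proceeds in three steps. First, apply \thmref{glivenko-cantelli} to each of the $K$ tuning-based empirical CDFs $\hat{F}_{S_k}$ and take a union over the finite index $k$ to obtain $\epsilon\tune \defn \max_k \sup_{x} |\hat{F}_{S_k}(x) - F_{S_k}(x)| \xrightarrow{a.s.} 0$ as $n\tune \to \infty$. Second, using the elementary inequality $|\max_k a_k - \max_k b_k| \leq \max_k |a_k - b_k|$, conclude that $|\bar{s}_i - T_i| \leq \epsilon\tune$ uniformly in the calibration index $i$, where $T_i \defn \max_k F_{S_k}(S_{i,k})$ are genuine i.i.d.\ copies of $T$, independent of the tuning data. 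This yields the sandwich $\hat{F}_T(x - \epsilon\tune) \leq \hat{F}_{\bar{S}}(x) \leq \hat{F}_T(x + \epsilon\tune)$ for the empirical CDFs of $\{\bar{s}_i\}_{i=1}^n$ and $\{T_i\}_{i=1}^n$ over the calibration indices. Third, since the $T_i$ are i.i.d., \thmref{glivenko-cantelli} gives $\sup_x |\hat{F}_T(x) - F_T(x)| \xrightarrow{a.s.} 0$ as $n \to \infty$, and combining this with the sandwich and the continuity of $F_T$ at the $(1\!-\!\alpha)$-quantile yields uniform almost-sure convergence $\hat{F}_{\bar{S}} \to F_T$ as both $n, n\tune \to \infty$.

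With $\hat{F}_{\bar{S}} \to F_T$ uniformly almost surely in hand, the conclusion follows by recreating the contradiction argument of \thmref{emp_quant_to_true_quant} almost verbatim, but with $\hat{F}_{\bar{S}}$ in place of an i.i.d.\ empirical CDF: upper and lower bounds on $\hat{\lambda}(d\cal) = \inf\{x : \hat{F}_{\bar{S}}(x) \geq \gamma_n\}$ around $\hat{\lambda}$ are obtained by exploiting the strict monotonicity and continuity of $F_T$ at $\hat{\lambda}$ together with $\gamma_n \to 1\!-\!\alpha$. The main obstacle is that the $\bar{s}_i$ are not i.i.d.\ samples from $F_T$ — they share a common tuning-dataset-induced perturbation — so \thmref{glivenko-cantelli} cannot be invoked on them directly. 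The sandwich inequality in the second step is the decoupling device that overcomes this: it cleanly separates the $n\tune \to \infty$ and $n \to \infty$ limits so that each can be handled by the existing Glivenko-Cantelli and quantile-convergence machinery already established in the appendix.
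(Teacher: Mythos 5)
Your proposal is correct and takes essentially the same route as the paper's proof: both identify $\hat{\lambda}$ as the $(1\!-\!\alpha)$-quantile $Q(1\!-\!\alpha)$ of $T$, view $\hat{\lambda}(d\cal)$ as an empirical quantile of the transformed calibration scores, and invoke Glivenko--Cantelli together with the quantile-convergence result of \thmref{emp_quant_to_true_quant}. Where you differ is in the treatment of the double limit. The paper's argument is sequential: it first sends $n\tune\to\infty$ at fixed $n$ to get $\bar{S}_i\to T_i$ pointwise for each calibration index $i$ (via Glivenko--Cantelli and the continuous mapping theorem applied to $\max_k$), and then sends $n\to\infty$ by applying \thmref{emp_quant_to_true_quant} directly to the i.i.d.\ limits $\{T_i\}$ — implicitly identifying the empirical quantile of the $\bar{s}_i$'s with that of the $T_i$'s once the first limit has been taken. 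Your sandwich device, built from the uniform deviation $\epsilon\tune = \max_k\sup_x|\hat{F}_{S_k}(x)-F_{S_k}(x)|$ and the inequality $|\max_k a_k - \max_k b_k|\leq\max_k|a_k-b_k|$, quantifies the tuning-induced perturbation uniformly in $i$ and thus handles $n$ and $n\tune$ jointly rather than sequentially; this is a cleaner and somewhat more rigorous version of the same idea. One minor over-reach: you conclude ``uniform almost-sure convergence $\hat{F}_{\bar{S}}\to F_T$,'' which would require $F_T$ continuous everywhere (the theorem only assumes continuity at $Q(1\!-\!\alpha)$). This does not harm the argument, since the contradiction scheme you recreate from \thmref{emp_quant_to_true_quant} only ever evaluates CDFs in a shrinking neighborhood of $Q(1\!-\!\alpha)$, where the assumed local continuity suffices; but the claim as stated should be localized.
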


\begin{proof}
    We first analyze the effect of $n\tune \rightarrow \infty$ for an arbitrary fixed $n$.
    Recall that the empirical CDF $\hat{F}_{S_k}(\cdot)$ of the nonconformity score for the $k$th component is computed as in \eqref{score_empirical_cdf} using the tuning samples $\{S_{i,k}\}_{i=n+1}^{n+n\tune}$.
    From \thmref{glivenko-cantelli}, $\hat{F}_{S_k}(\cdot)$ converges uniformly to the CDF $F_{S_k}(\cdot)$ almost surely as $n\tune \rightarrow \infty$.
    As a result, it follows that for each \emph{calibration} nonconformity score $S_{i,k}$, where $i\in\{1,\dots,n\}$, we have
    \[
    \bar{S}_{i,k} \defn \hat{F}_{S_k}(S_{i,k}) \xrightarrow{a.s.} F_{S_k}(S_{i,k}) 
    \]
    as $n\tune \rightarrow \infty$,
    recalling the definition of the transformed score $\bar{S}_{i,k}$ from \eqref{transformed_score}.
    Let us now consider the maximum transformed score $\bar{S}_i \defn \max_k\{ \bar{S}_{i,k}\}_{k=1}^{K}$ defined in \eqref{max_transformed_score}.
    Since the maximum function is continuous everywhere on $\mathbb{R}^K$ and $\hat{F}_{S_k}(S_{i,k}) \xrightarrow{a.s.} F_{S_k}(S_{i,k})$, the continuous mapping theorem implies that 
    \[
    \bar{S}_i = \max_k \hat{F}_{S_k}(S_{i,k}) \xrightarrow{a.s.} \max_k F_{S_k}(S_{i,k}) \defn T_i
    \]
    as $n\tune \rightarrow \infty$.
    Because $\{S_{i,k}\}_{i=1}^n$ are assumed to be i.i.d with CDF $F_{S_k}(\cdot)$, we see that $\{T_i\}_{i=1}^n$ are i.i.d with CDF $F_{T}(\cdot)$ for $T \defn \max_k F_{S_k}(S_k)$.

    Next, we analyze the effect of $n\rightarrow \infty$.
    Let us denote the $n$-sample empirical quantile of $T$ as $Q_n(\cdot)$ and the quantile of $T$ as $Q(\cdot)$.
    Recall from \eqref{lambda_hat2} that
    \[
    \hat{\lambda}(D\cal) = Q_n\bigg(\frac{\lceil (1-\alpha)(n+1)\rceil}{n} \bigg). 
    \]
    Because $F_{T}(\cdot)$ is assumed to be continuous and strictly increasing at $Q(1-\alpha)$, \thmref{emp_quant_to_true_quant} establishes that, as $n\rightarrow\infty$,
    \begin{align}
    \hat{\lambda}(D\cal) = Q_n\bigg(\frac{\lceil (1-\alpha)(n+1)\rceil}{n} \bigg) \xrightarrow{a.s.}Q(1-\alpha) 
    \label{eq:lambda_hat3} .
    \end{align}

    Finally, recall the definition of $\hat{\lambda}$ from \eqref{minimax_single_parameter}:
    \[
    \hat{\lambda}
    = \arg\min_{\lambda} \lambda 
    ~~\text{s.t.}~ 
    \Pr\{ \cap_{k=1}^K F_{S_k}(S_k) \leq \lambda \} 
    \geq 1-\alpha. 
    \]
    The constraint can be rewritten as 
    \[
    \Pr\{ \max_k F_{S_k}(S_k) \leq \lambda \} 
    = \Pr\{ T \leq \lambda \} 
    \geq 1-\alpha,
    \]
    which allows \eqref{minimax_single_parameter} to be rewritten as
    \begin{align}
    \hat{\lambda}
    = \arg\min_{\lambda} \lambda \
    ~\text{s.t.}~ \Pr\{T \leq \lambda\} \geq 1-\alpha
    \label{eq:minimax_single_parameter2} .
    \end{align}
    But the $\hat{\lambda}$ in \eqref{minimax_single_parameter2} is simply the $(1-\alpha)$-level quantile of $T$. 
    In other words,
    \begin{align}
    \hat{\lambda} 
    = \inf\{ \lambda : F_{T}(\lambda) \geq 1-\alpha\}
    = Q(1-\alpha) 
    \label{eq:minimax_single_parameter3} .
    \end{align}
    Finally, combining \eqref{lambda_hat3} with \eqref{minimax_single_parameter3}, we conclude that
    \[
    \hat{\lambda}(D\cal) \xrightarrow{a.s.}\hat{\lambda}
    \]
    as $n\tune\rightarrow\infty$ and $n\rightarrow \infty$.
\end{proof}

\section{Classifier Details} \label{app:classifier_details}

We train the multi-label classifier on the $K=5$ labels with the most annotations in the non-fat-suppressed subset of the fastMRI+ knee data from \citet{Zhao:SD:22}.
\Tabref{label_quantity} shows the number of positive samples for each of those labels.
Note that images with multiple instances of the same pathology only count as a single positive sample. 

\begin{table}
    \centering \caption{Number of positive samples in the non-fat-suppressed subset of the fastMRI+ knee dataset.}
    \begin{tabular}{lcc}
        \toprule
        Label & Positive Training Samples & Positive Validation Samples \\
        \midrule
        Meniscus Tear & 1921 & 335\\
        Cartilage - Partial Thickness loss/defect & 871 & 176\\
        Joint Effusion & 225 & 41\\
        Bone-Fracture/Contusion/dislocation & 97 & 6\\
        Bone - Subchondral edema & 76 & 21\\
        \bottomrule
    \end{tabular}
    \label{tab:label_quantity}
\end{table}

We implement and train the multi-label classifier using nearly the same procedure as \citet{Wen:ECCV:24}. 
In particular, we start by initializing a standard ResNet-50 \citep{He:CVPR:16} with the pretrained ImageNet weights from \citep{Deng:CVPR:09}, after which we reduce the number of final-layer outputs to $K=5$. 
Then we pretrain the network in a self-supervised fashion using the (unlabeled) non-fat-suppressed fastMRI knee data following the SimCLR procedure from \citet{Chen:ICML:20} with a learning rate of 0.0002, batch size of 128, and 500 epochs.
Finally, we perform supervised fine-tuning using binary cross-entropy loss on the fastMRI+ data, where we address class imbalance by weighting the loss contribution from each class by the ratio of negative labels to positive labels for that particular class.
To encourage adversarial robustness, we use the same $l_2$-bounded gradient ascent attack as \cite{Wen:ECCV:24}, and we train the classifier for 150 epochs with a batch size of 128, learning rate of $5\mathrm{e}{-5}$, and weight decay of $1\mathrm{e}{-7}$.
Finally, we save the model checkpoint with the lowest validation loss.
Performance on the validation dataset is shown in \tabref{classifier_performance}.

\begin{table}
\centering
\caption{Classifier performance on the fastMRI+ validation set.}
\begin{tabular}{lcccc}
\toprule
Label & Accuracy & Precision & Recall & AUROC \\
\midrule
Meniscus Tear & 0.6595 & 0.3005 & 0.9784 & 0.889 \\
Cartilage - Partial Thickness loss/defect & 0.6184 & 0.1558 & 0.8988 & 0.8564 \\
Joint Effusion & 0.9031 & 0.1356 & 0.8000 & 0.9465 \\
Bone-Fracture/Contusion/dislocation & 0.7715 & 0.0060 & 0.5000 & 0.7971 \\
Bone - Subchondral edema & 0.5704 & 0.0127 & 0.5714 & 0.6338 \\
\midrule
Average & 0.7046 & 0.1221 & 0.7497 & 0.8246 \\
\bottomrule
\end{tabular}
\label{tab:classifier_performance}
\end{table}



\end{document}